\documentclass{article}

\PassOptionsToPackage{numbers, compress}{natbib}




\usepackage[final]{neurips_2021}

\usepackage[utf8]{inputenc} 
\usepackage[T1]{fontenc}    
\usepackage{hyperref}       
\usepackage{url}            
\usepackage{booktabs}       
\usepackage{amsfonts}       
\usepackage{nicefrac}       
\usepackage{microtype}      
\usepackage{xcolor,xspace}
\usepackage{wrapfig}

\usepackage{graphicx}
\usepackage{bm}
\usepackage{hhline}
\usepackage{amsmath,amssymb,amsthm}
\usepackage{subcaption}
\usepackage[capitalize,noabbrev]{cleveref}

\usepackage{enumitem}
\setlist{itemsep=2pt,topsep=3pt,leftmargin=*}

\usepackage{algorithm}
\usepackage[noend]{algorithmic}

\newcommand{\set}[1]{\{#1\}}
\DeclareMathOperator*{\argmin}{argmin}
\DeclareMathOperator*{\argmax}{argmax}

\newtheorem{theorem}{Theorem}
\newtheorem{lemma}{Lemma}

\newtheorem{definition}{Definition}
\newtheorem{problem}{Problem}
\newtheorem{remark}{Remark}

\newcommand{\norm}[1]{\left\lVert#1\right\rVert}
\newcommand{\eat}[1]{}


\newcommand{\vect}[1]{\bm{#1}}


\newcommand{\oracle}{$\textsc{Marginal-Oracle}$\xspace}

\newcommand{\x}{\mathbf{x}}

\newcommand{\p}{\mathbf{p}}
\newcommand{\y}{\mathbf{y}}
\newcommand{\z}{\mathbf{z}}

\newcommand{\X}{\mathbf{X}}
\newcommand{\C}{\mathcal{C}}
\newcommand{\M}{\mathcal{M}}
\newcommand{\V}{\mathcal{V}}
\newcommand{\E}{\mathcal{E}}
\renewcommand{\L}{\mathcal{L}}
\renewcommand{\S}{\mathcal{S}}

\newcommand{\bmu}{{\boldsymbol{\mu}}}

\newcommand{\btheta}{{\boldsymbol{\theta}}}
\newcommand{\btau}{{\boldsymbol{\tau}}}

\newcommand{\I}{\mathbb{I}}
\newcommand{\R}{\mathbb{R}}

\newcommand{\grad}{\nabla}

\newcommand{\apgm}{\textsc{APPGM}\xspace}
\newcommand{\appgm}{\textsc{APPGM}\xspace} 
\newcommand{\pgm}{\textsc{Private-PGM}\xspace}
\newcommand{\prox}{\textsc{Prox-PGM}\xspace}
\newcommand{\gbp}{\textsc{Convex-GBP}\xspace}

\newcommand{\dom}{\Omega}

\def\alg{{\mathcal A}}
\def\db{\mathbf{X}}
\def\nbrs{\textrm{nbrs}}

\def\counting{\kappa}

\newcommand{\mysf}[1]{\textsf{\small{#1}}}

\title{Relaxed Marginal Consistency for Differentially Private Query Answering}

%

\author{%
  Ryan McKenna, Siddhant Pradhan, Daniel Sheldon, Gerome Miklau \\
  College of Information and Computer Sciences\\
  University of Massachusetts \\
  Amherst, MA 01002 \\
  \texttt{\{rmckenna, sspradhan, sheldon, miklau\}@cs.umass.edu} \\
}

\begin{document}

\maketitle

\begin{abstract}
Many differentially private algorithms for answering database queries involve a step that reconstructs a discrete data distribution from noisy measurements. This provides consistent query answers and reduces error, but often requires space that grows exponentially with dimension. \pgm is a recent approach that uses graphical models to represent the data distribution, with complexity proportional to that of exact marginal inference in a graphical model with structure determined by the co-occurrence of variables in the noisy measurements. \pgm is highly scalable for sparse measurements, but may fail to run in high dimensions with dense measurements. We overcome the main scalability limitation of \pgm through a principled approach that relaxes consistency constraints in the estimation objective. Our new approach works with many existing private query answering algorithms and improves scalability or accuracy with no privacy cost.

\end{abstract}

\section{Introduction} \label{sec:intro}

A central problem in the design of differentially private algorithms is answering sets of counting queries from a database. Many proposed algorithms follow the select-measure-reconstruct paradigm: they {\em select} a set of measurement queries, they privately {\em measure} them (using Gaussian or Laplace noise addition), and then they {\em reconstruct} the data or query answers from the noisy measurements. When done in a principled manner, the reconstruct phase serves a number of critical functions: it combines the noisy evidence provided by the measurement queries, it allows new unmeasured queries to be answered (with no additional privacy cost), and it resolves inconsistencies in the noisy measurements to produce consistent estimates, which often have lower error.  In this paper, we propose a novel, scalable, and general-purpose approach to the reconstruct step. With a principled approach to this problem, future research can focus on the challenging open problem of query selection.

Most existing \emph{general-purpose} methods for reconstruction cannot scale to high-dimensional data, as they operate over a vectorized representation of the data, whose size is exponential in the dimensionality \cite{li2010optimizing,lee2015maximum,Zhang18Ektelo,nikolov2013geometry,hardt2012simple,barak2007privacy}.  Some special purpose methods exist that have better scalability, but are only applicable within a particular mechanism or in certain special cases \cite{hay2010boosting,ding2011differentially,qardaji2014priview,zhang2021privsyn,zhang2018calm,
hardt2012simple,dwork2015efficient,liu2021leveraging,aydore2021differentially,liu2021iterative}.
%
%
A recently-proposed method, \pgm~\cite{mckenna2019graphical}, offers the scalability of these special purpose methods and retains much of the generality of the general-purpose methods.  
\pgm can be used for the reconstruction phase whenever the measurements only depend on the data through its low-dimensional marginals.  \pgm avoids the data vector representation in favor of a more compact graphical model representation, and was shown to dramatically improve the scalability of a number of popular mechanisms while also improving accuracy~\cite{mckenna2019graphical}. \pgm was used in the winning entry of the 2018 NIST differential privacy synthetic data contest~\cite{nist,mckenna2021winning}, as well as in \emph{both} the first and second-place entry of the follow-up 2020 NIST differential privacy temporal map contest \cite{temporalmap,caidata}.

While \pgm is far more scalable than operating over a vector representation of the data, it is still limited. In particular, its required memory and runtime depend on the structure of the underlying graphical model, which in turn is determined by which marginals the mechanism depends on.  When the mechanism depends on a modest number of carefully chosen marginals, \pgm is extremely efficient.  But, as the number of required marginals increases, the underlying graphical model becomes intractably large, and \pgm eventually fails to run. This is due to the inherent hardness of exact marginal inference in a graphical model.


In this paper, we overcome the scalability limitations of \pgm by proposing a natural relaxation of the estimation objective that enforces specified \emph{local} consistency constraints among marginals, instead of global ones, and can be solved efficiently. Our technical contributions may be of broader interest.
We develop an efficient algorithm to solve a generic convex optimization problem over the local polytope of a graphical model, which uses a body of prior work on generalized belief propagation~\cite{heskes2004uniqueness,wainwright2003tree,wiegerinck2005approximations,loh2014concavity,hazan2012tightening,yedidia2005constructing,hazan2012convergent,meltzer2012convergent,heskes2006convexity,heskes2003generalized,pakzad2005estimation} and can scale to problems with millions of optimization variables.
We also propose a variational approach to predict ``out-of-model'' marginals given estimated pseudo-marginals, which gives a completely variational formulation for both estimation and inference: the results are invariant to optimization details, including the approximate inference methods used as subroutines. 

Our new approach, \textsc{Approx-Private-PGM} (\apgm), offers many of the same benefits as \pgm, but can be deployed in far more settings, allowing effective reconstruction to be performed without imposing strict constraints on the selected measurements. We show that \apgm permits efficient reconstruction for HDMM~\cite{mckenna2018optimizing}, while also improving its accuracy, allows MWEM~\cite{hardt2012simple} to scale to far more measurements, and improves the accuracy of FEM \cite{vietri2020new}.

\section{Background}

We first review background on our data model, marginals, and differential privacy, following~\cite{mckenna2019graphical}.

\paragraph{Data}
Our input data represents a population of individuals, each contributing a single record $\x = (x_1, \ldots, x_d)$ where $x_i$ is the $i^{th}$ attribute belonging to a discrete finite domain $\dom_i$ of $n_i$ possible values.  The full domain is $\dom = \prod_{i=1}^d \dom_i$ and its size $n = \prod_{i=1}^d n_i$ is exponential in the number of attributes. A dataset $\mathbf{X}$ consists of $m$ such records $\mathbf{X} = (\x^{(1)}, \ldots, \x^{(m)})$. 
It is often convenient to work with an alternate representation of $\X$: the \emph{data vector} or \emph{data distribution} $\p$ is a vector of length $n$, indexed by $\x \in \dom$ such that $\p(\x)$ counts the fraction of individuals with record equal to $\x$. That is, $\p(\x) = \frac{1}{m} \sum_{i=1}^m \I\{\x^{(i)}  = \x\}, \forall \x \in \dom$, where $\I\{ \cdot \}$ is an indicator function. 

\paragraph{Marginals}
When dealing with high-dimensional data, it is common to work with {\em marginals} defined over a subset of attributes.  Let $r \subseteq [d]$ be a \emph{region} or \emph{clique}
that identifies a subset of attributes and, for $\x \in \dom$, let $\x_r = (x_i)_{i \in r}$ be the sub-vector of $\x$ restricted to $r$. Then the marginal vector (or simply ``marginal on $r$'') $\bmu_r$, is defined by:
\begin{equation}
\label{eq:marginal}
\bmu_r(\x_r) = \frac{1}{m}\sum_{i=1}^m \I\{ \x^{(i)}_r = \x_r\}, \quad \forall \x_r \in \dom_r := \prod_{i \in r} \dom_i.
\end{equation}
This marginal is the data vector on the sub-domain $\Omega_r$ corresponding to the attribute set~$r$. Its size is  $n_r := |\dom_r| = \prod_{i \in r} n_i$, which is exponential in $|r|$ but may be considerably smaller than $n$. A marginal on $r$ can be computed from the full data vector or the marginal for any superset of attributes by summing over variables that are not in $r$.
We denote these (linear) operations by $M_r$ and $P_{s\to r}$, so $\bmu_r = M_r \p = P_{s\to r}\bmu_s$ for any $r \subseteq s$. 
We will also consider vectors $\bmu$ that combine marginals for each region in a collection $\C$, and let $M_\C$ be the linear operator such that $\bmu = (\bmu_r)_{r \in \C} = M_{\C} \p$.

\paragraph{Differential Privacy}
Differential privacy protects individuals by bounding the impact any one individual can have on the output of an algorithm. 

\begin{definition}[Differential Privacy \cite{Dwork06Calibrating}] \label{def:dp}
A randomized algorithm $\alg$ satisfies $(\epsilon, \delta)$-differential privacy if, for any input $\db$, any $\db' \in \nbrs(\db)$, and any subset of outputs $S \subseteq \textrm{Range}(\alg)$,
$$ \Pr[\alg(\db) \in S] \leq \exp(\epsilon) \Pr[\alg(\db') \in S] + \delta$$
\end{definition}

Above, $\nbrs(\db)$ denotes the set of datasets formed by replacing any $\x^{(i)} \in \db$ with an arbitrary new record $\x'^{(i)} \in \dom$.  When $ \delta = 0 $ we say $ \alg $ satisfies $\epsilon$-differential privacy.

\section{Private-PGM}

In this section we describe \pgm \cite{mckenna2019graphical}, a general-purpose reconstruction method applied to differentially private measurements of a discrete dataset. There are two steps to \pgm: (1) \emph{estimate} a representation of the data distribution given noisy measurements, and (2) \emph{infer} answers to new queries given the data distribution representation.

In particular, suppose an arbitrary $(\epsilon, \delta)$-differentially private algorithm $\alg$ is run on a discrete dataset with data vector $\p_0$, where $\alg$ only depends on $\p_0$ through its low-dimensional marginals $\bmu_0 = M_\C \p_0$ for some collection of cliques $\C$.  The sample $\y \sim \alg(\bmu_0)$ typically reveals noisy high-level aggregate information about the data. \pgm will first estimate a compact representation of a distribution $\hat\p$ that explains $\y$ well, and then answer new queries using $\hat\p$.

\paragraph{Estimation: Finding a Data Distribution Representation} \pgm first estimates a data vector by finding $\hat\p$ to solve the inverse problem $\min_{\p} L(M_\C \p)$, where $L(\bmu)$ is a convex loss function that measures how well $\bmu$ explains the observations $\y$.
Since $L(M_\C \p)$ only depends on $\p$ through its marginals, it is clear we can find the optimal \emph{marginals} by instead solving the following problem.

\begin{problem}[Convex Optimization over the Marginal Polytope] \label{prob:marginal}
Given a clique set $\C$ and convex loss function $L(\bmu)$, solve
$$
\hat{\bmu} \in \argmin_{\bmu \in \M(\C)} L(\bmu),
$$
where $\M(\C) = \{ \bmu: \exists\; \p \text{ s.t. } M_\C \p = \bmu \}$ is the set of realizable marginals, known as the \emph{marginal polytope} of $\C$ \cite{wainwright2008graphical}.
\end{problem}
The solution to this problem gives marginals that are \emph{consistent} with some underlying data vector and, therefore, typically provide a better estimate of the true marginals than~$\y$. 
In the general case, the loss function $L$ can simply be set to the \emph{negative log likelihood}, i.e., $L(\bmu) = -\log \Pr[\alg(\bmu) = \y]$,\footnote{For mechanisms with continuous output values, interpret this as a negative log-density.} however other choices are also possible.
As a concrete motivating application, consider the case where the mechanism $\alg$ adds Gaussian noise directly to the data marginals $\bmu_0$, i.e., $\alg(\bmu_0) = \y$ where $\y_r = \bmu_{0,r} + \mathcal{N}(0, \sigma^2 I_{n_r})$. In this case, the log-likelihood is proportional to the squared Euclidean distance and gives the loss function $ L(\bmu) = \norm{\bmu - \y}_2^2$, so the problem at hand is an $L_2$ minimization problem. The theory for \pgm  focuses on convex loss functions, but the algorithms are also used to seek local minima of \cref{prob:marginal} when $L$ is non-convex.

\begin{wrapfigure}{R}{0.42\textwidth}
\vspace{-2.5em}
\begin{minipage}{0.42\textwidth}
\begin{algorithm}[H]
    \caption{\prox \cite{mckenna2019graphical}} \label{alg:proximal}
\begin{algorithmic}
    \STATE {\bfseries Input:} Convex loss function $L(\bmu)$
    \STATE {\bfseries Output:} Marginals $\hat\bmu$, parameters $\hat\btheta$ 
    \STATE $\hat\btheta = \vect{0}$
    \FOR{$t=1, \dots, T$}
        \STATE $\hat\bmu = \oracle(\hat\btheta)$
        \STATE $\hat\btheta = \hat\btheta - \eta_t \grad L(\hat\bmu)$
    \ENDFOR
    \STATE {\bfseries return} $\hat\bmu$, $\hat\btheta$ 
\end{algorithmic}
\end{algorithm}
\end{minipage}
\vspace{-2.5em}
\end{wrapfigure}
\paragraph{Graphical models} Two remaining issues are how to solve \cref{prob:marginal} and how to recover a full data vector from $\hat\bmu$. \pgm addresses both with \emph{graphical models}.
A graphical model with clique set $\C$ is a distribution over $\Omega$ where the unnormalized probability is a product of factors involving only subsets of variables, one for each clique in $\C$. It has the form
$$
\p_\btheta(\x) = \frac{1}{Z} \exp\left( \sum_{r \in \C} \btheta_r(\x_r) \right).
$$
The real numbers $\btheta_r(\x_r)$ are \emph{log-potentials} or \emph{parameters}. The full parameter vector $\btheta = (\btheta_r(\x_r))_{r \in \C,\x_r \in \Omega_r}$ matches the marginal vector $\bmu$ in size and indexing, and the relationship between these two vectors is central to graphical models~\cite{wainwright2008graphical}:
\begin{itemize}
\item A parameter vector $\btheta$ determines a unique marginal vector $\bmu_{\btheta} \in \M(\C)$, defined as $\bmu_\btheta = M_\C\p_\btheta$, the marginals of $\p_\btheta$. \emph{Marginal inference} is the problem of (efficiently) computing $\bmu_\btheta$ from $\btheta$. It can be solved exactly by algorithms such as \emph{variable elimination} or \emph{belief propagation} with a junction tree~\cite{koller2009probabilistic}. We denote by \oracle an algorithm that outputs $\bmu_\btheta$ on input~$\btheta$.
\item For every $\bmu \in \M(\C)$ with positive entries, there is a unique distribution $\p_\btheta$ in the family of graphical models with cliques $\C$ that has marginals~$\bmu$, and $\p_\btheta$ has \emph{maximum entropy among all distributions with marginals $\bmu$}.  
\end{itemize}

\prox (Algorithm~\ref{alg:proximal}) is a proximal algorithm to solve \cref{prob:marginal}~\cite{mckenna2019graphical}. It returns a marginal vector $\hat\bmu \in \M(\C)$ that minimizes $L(\bmu)$ \emph{and} parameters $\hat\btheta$ such that $\p_{\hat\btheta}$ has marginals $\hat\bmu$. The core of the computation is repeated calls to \oracle. The estimated graphical model $\p_{\hat\btheta}$ has cliques $\C$ that coincide with the marginals measured by the privacy mechanism, and has maximum entropy among all distributions whose marginals minimize $L(\bmu)$.
In general, there will be infinitely many distributions that have marginals $\hat\bmu$ (and hence are equally good from the perspective of the loss function $L$). \pgm chooses the distribution with maximum entropy, which is an appealing way to break ties that falls out naturally from the graphical model. 

\paragraph{Inference: Answering New Queries}
With $\p_{\hat\btheta}$ in hand, \pgm can readily estimate new marginals $\bmu_r$.  There are two separate cases. If $r$ is contained in a clique of  $C$, we say $r$ is ``in-model'', and we can readily calculate $\bmu_r$ from the output of \prox. The more interesting case occurs when $r$ is out-of-model (is not contained in a clique of $\C$): in this case, the standard way to compute $\bmu_r$ is to perform variable elimination in the graphical model $\p_{\btheta}$. 

\begin{remark}The complexity of \pgm depends on that of \oracle, which depends critically on the structure of the cliques $\C$ measured by the privacy mechanism. In general, running time is exponential in the treewidth of the graph $G$ induced by attribute co-occurrence within a clique of $\C$. When $G$ is tree-like, \pgm can be highly efficient and exponentially faster than working with a full data vector. When $G$ is dense, \pgm may fail to run due to time or memory constraints. This limitation is not specific to the \prox algorithm: \cref{prob:marginal} is as hard as marginal inference, which can be solved by minimizing the (convex) variational free energy over the marginal polytope~\cite{koller2009probabilistic}. A primary difficulty is the intractability of $\M(\C)$, which is a convex set, but in general requires a very large number of constraints to represent explicitly~\cite{wainwright2008graphical}.  In two state-of-the-art mechanisms for synthetic data, MST \cite{mckenna2021winning} and PrivMRF \cite{caidata}, $\C$ was specifically chosen to limit the treewidth and ensure tractability of Private-PGM.  In other mechanisms agnostic to the limitations of \pgm, like HDMM and MWEM, the set $\C$ can often lead to graphs with intractable treewidths.
\end{remark}

\section{Our Approach}\label{sec:approach}

In this section we describe our approach to overcome the main scalability limitations of \pgm, by introducing suitable approximations and new algorithmic techniques.  Our innovations allow us to scale significantly better than \pgm with respect to the size of $\C$. A high-level idea is to use approximate marginal inference in the \prox algorithm, but doing so naively would make it unclear what, if any, formal problem is being solved. We will develop a principled approach that uses approximate inference to \emph{exactly} solve a relaxed optimization problem.

\begin{figure}[t]
  \begin{subfigure}[b]{0.18\textwidth}
    \centering\includegraphics[width=\textwidth]{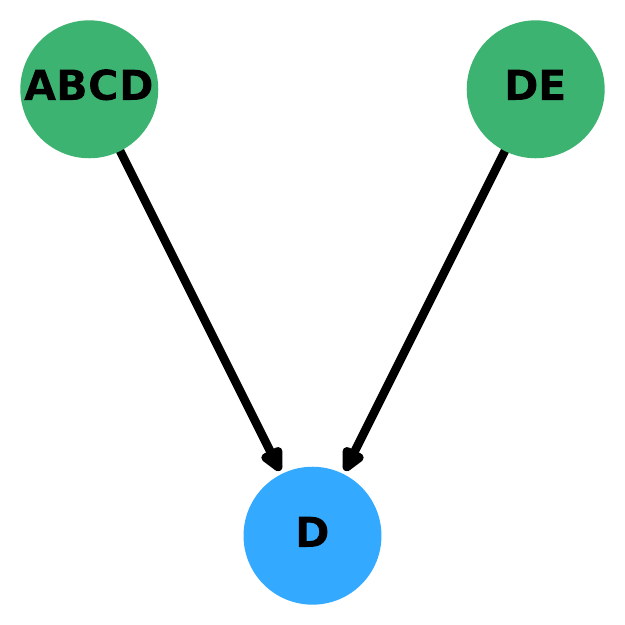}
    \caption{\label{fig:junction} Junction Tree}
  \end{subfigure}%
  \hspace{2em}
  \begin{subfigure}[b]{0.36\textwidth}
    \centering\includegraphics[width=\textwidth]{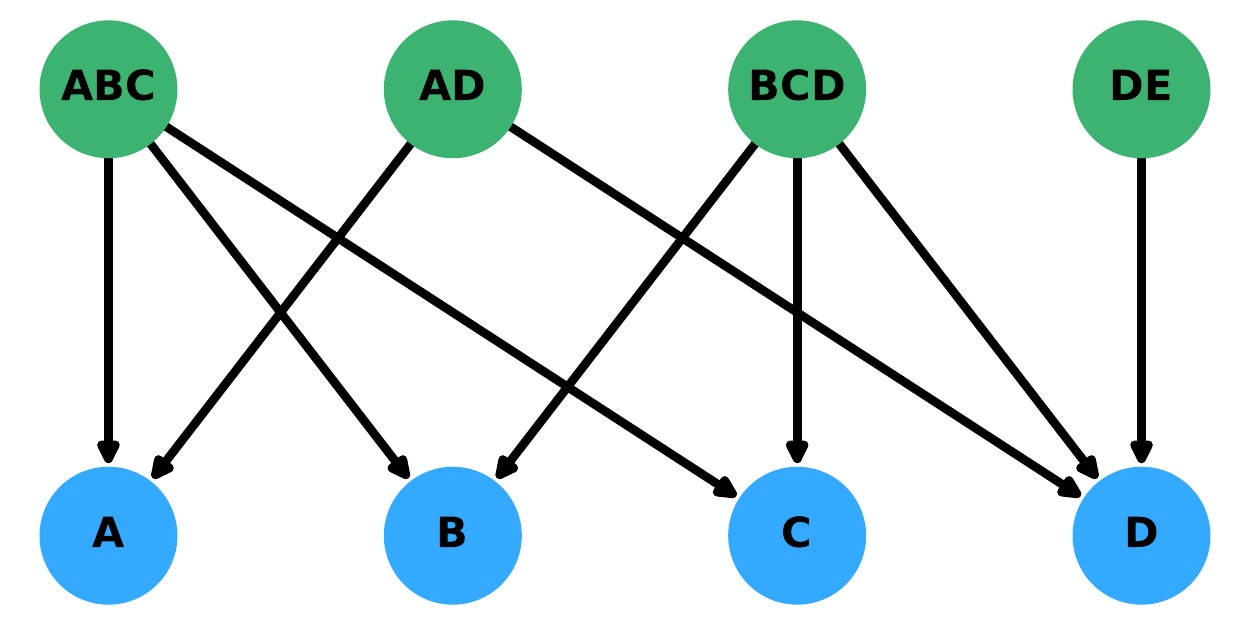}
    \caption{\label{fig:factor} Factor Graph}
  \end{subfigure}%
  \hspace{2em}
  \begin{subfigure}[b]{0.36\textwidth}
    \centering\includegraphics[width=\textwidth]{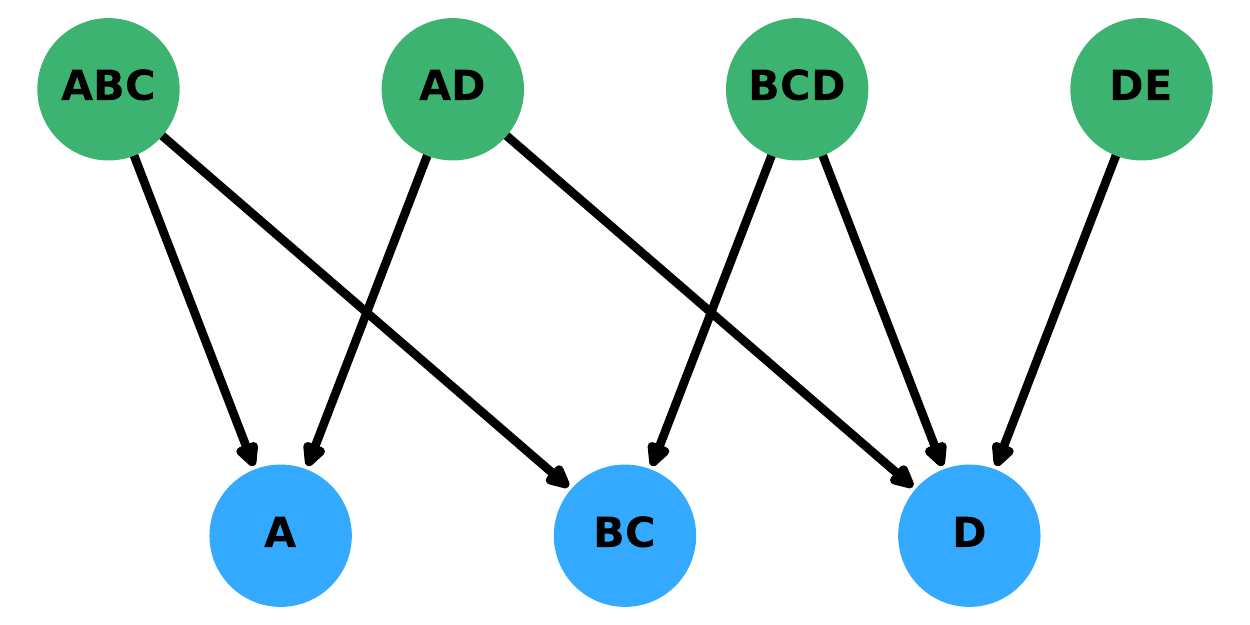}
    \caption{\label{fig:region} Saturated Region Graph}
  \end{subfigure}%
\caption{\label{fig:regions} Comparison of different region graph structures defined over a domain with attributes $\set{A,B,C,D,E}$ that support the cliques $\C = \set{\set{A,B,C}, \set{A,D}, \set{B,C,D}, \set{D,E}}$.}
\end{figure}

\paragraph{Region Graphs}
Central to our approach is the notion of a \emph{region graph}, which is a data structure that encodes constraints between cliques in a natural graphical format and facilitates message passing algorithms for approximate marginal inference.

\begin{definition}[Region Graph \cite{koller2009probabilistic}]
A region graph $G = (\V, \E)$ is a directed graph where every vertex $r \in \V$ is an attribute clique and for any edge $r \rightarrow s \in \E$ we have that $ r \supseteq s $. We say that $G$ supports a clique set $\C$ if for every clique $r \in \C$, there exists some $r' \in \V$ such that $r \subseteq r'$.
\end{definition}
For any region graph, there is a corresponding set of constraints that characterize the \emph{local polytope} of internally consistent \emph{pseudo-marginals}, defined below.  

\begin{definition}[Local Polytope \cite{koller2009probabilistic}]
The local polytope of pseudo-marginals associated with a region graph $G=(\V,\E)$ is:
\begin{equation} \label{eq:localpolytope}
\L(G) =
  \left\{ \btau \geq 0
  \;\middle\vert\;
  \begin{array}{@{}l@{}l@{}}
  \mathbf{1}^{\top} \btau_r = 1 & \forall r \in \V \\
  P_{r \rightarrow s} \btau_r = \btau_s \quad \quad & \forall r \rightarrow s \in \E \\
  \end{array}
  \right\}.
\end{equation}
\end{definition}

The nodes in the region graph correspond to the cliques in the pseudo-marginal vector, while the edges in the region graph dictate which internal consistency constraints we expect to hold between two cliques.  These constraints are necessary, but not sufficient, for a given set of pseudo-marginals to be realizable, i.e., $\M(\V) \subseteq \L(G)$.   In the special case when $G$ is a junction tree, these constraints are also sufficient, and we have $\M(\V) = \L(G)$.We use the notation $\btau$ in place of $\bmu$ to emphasize that $\btau$ is not necessarily a valid marginal vector, even though we will generally treat it as such.  This notational choice is standard in the graphical models literature \cite{wainwright2008graphical}.  The general idea is to relax problems involving the intractable marginal polytope to use the local polytope instead, since $\L(G)$ is straightforward to characterize using the linear constraints in Equation~\eqref{eq:localpolytope}.  

Region graphs can encode different structures, including junction trees and factor graphs as special cases.
For example, \cref{fig:regions} shows three different region graphs that support $\C = \set{\set{A,B,C}, \set{A,D}, \set{B,C,D}, \set{D,E}}$.  At one extreme is the \mysf{Junction Tree}, shown in \cref{fig:junction}, which is obtained by merging cliques $\set{A,B,C}$, $\set{A,D}$, and $\set{B,C,D}$ into a super-clique $\set{A,B,C,D}$.  Here, $\M(\V) = \L(G)$, and ordinary belief propagation in this graph corresponds to exact marginal inference.  At the other end of the extreme is the \mysf{Factor Graph}, shown in \cref{fig:factor}.  This graph contains one vertex for every clique $r \in \C$, plus additional vertices for the singleton cliques.  It encodes constraints that all cliques must agree on common one-way marginals. For example, $\btau_{ABC}$ and $\btau_{BCD}$ must agree on the shared one-way marginals $\btau_B$ and $\btau_C$, but not necessarily on the shared two-way marginal $\btau_{BC}$.  A natural middle ground is the fully \mysf{Saturated Region Graph}, shown in \cref{fig:region}.  This graph includes every clique $r \in \C$ as a vertex, and includes additional vertices to capture intersections between those cliques.  Unlike the factor graph, this graph \emph{does} require that $\btau_{ABC}$ is consistent with $\btau_{BCD}$ with respect to the $\btau_{BC}$ marginal.  Unlike the junction tree, this graph does not require forming super-cliques whose size grow quickly with $|\C|$.  For more details about the concepts above, please refer to \cite[][Section 11.3]{koller2009probabilistic}.

The methods we describe in this paper apply for any region graph that supports $\C$.
By default, we simply use the fully saturated region graph, which is the smallest region graph that encodes all internal consistency constraints, and can easily be constructed given the cliques $\C$ \cite{koller2009probabilistic}.

\eat{\begin{definition}[Saturated Region Graph]
Given a set of cliques $\C$, the associated saturated region graph $G=(\V,\E)$ is defined as
\begin{enumerate}
\item The vertices are the closure of $\C$ under intersection (i.e., the smallest set $\V$ that contains $\C$ and is closed under intersection).  
\item Whenever $r \supseteq s $ and there is no $t$ such that $ r \supseteq t \supseteq s$, there is an edge $r \rightarrow s$.  
\end{enumerate}
\end{definition}}


\paragraph{Estimation: Finding an Approximate Data Distribution Representation}

We begin by introducing a very natural relaxation of the problem we seek to solve.

\begin{problem}[Convex Optimization over the Local Polytope] \label{prob:convex}
Given a region graph $G$ and a convex loss function $L(\btau)$ where $\btau = (\btau_r)_{r \in \V}$, solve:
$$ \hat{\btau} = \argmin_{\btau \in \L(G)} L(\btau). $$ 
\end{problem}

In the problem above, we simply replaced the marginal polytope $\M(\V)$ from our original problem\footnote{If $G$ supports $\C$, we can assume without loss of generality that \cref{prob:marginal} was defined on $\M(\V)$ instead of $\M(\C)$. In particular, the loss function $L$ can be written to depend on marginals $(\bmu_{r'})_{r' \in \V}$ instead of $(\bmu_r)_{r \in \C}$, because the latter can be computed from the former.} with the local polytope $\L(G)$. Since this is a convex optimization problem with linear constraints, it can be solved with a number of general purpose techniques, including interior point and active set methods \cite{boyd2004convex}.  However, these methods do nothing to exploit the special structure in the constraint set $\L(G)$, and as such, they have trouble running on large-scale problems.  

Our first contribution is to show that we can solve \cref{prob:convex} efficiently by instantiating \prox with a carefully chosen approximate marginal oracle.  To do so, it is useful to view approximate marginal inference through the lens of the free energy minimization problem, stated below.

\begin{problem}[Approximate Free Energy Minimization \cite{koller2009probabilistic}] \label{prob:energy}
Let $G$ be a region graph, $\btheta = (\btheta_r)_{r \in \V}$ be real-valued parameters, and $H_{\kappa}(\btau) = \sum_{r \in \V} \kappa_r H(\btau_r) $, where $\kappa_r \in \mathbb{R}$ are counting numbers, and $H(\btau_r) = -\sum_{\x_r \in \dom_r} \btau_r(\x_r) \log{\btau_r(\x_r)}$ is the Shannon entropy of $\btau_r$, solve:
$$ \hat{\btau} = \argmin_{\btau \in \L(G)} -\btau^{\top} \btheta - H_{\kappa}(\btau) $$
\end{problem}

This problem approximates the (intractable) variational free energy minimization problem~\cite{wainwright2008graphical}, for which the optimum gives the true marginals of $\p_\btheta$, by using $\L(G)$ instead of $\M(\V)$, and using $H_{\kappa}(\btau)$ as an approximation to the full Shannon entropy.
Many algorithms for approximate marginal inference can be seen as solving variants of this free energy minimization problem under different assumptions about $G$ and $\counting$ \cite{heskes2004uniqueness,wainwright2003tree,wiegerinck2005approximations,loh2014concavity,hazan2012tightening,yedidia2005constructing,hazan2012convergent,meltzer2012convergent,heskes2006convexity,heskes2003generalized,pakzad2005estimation}.  

\begin{theorem}[Algorithm for Approximate Free Energy Minimization \cite{hazan2012tightening}] \label{thm:energy}
Given a region graph $G$, parameters $\btheta = (\btheta_r)_{r \in \V}$, and any positive counting numbers $\counting_r > 0$ for $ r \in \V$, the convex generalized belief propagation (\gbp) algorithm of~\cite{hazan2012tightening} solves the approximate free energy minimization problem of \cref{prob:energy}.  
\end{theorem}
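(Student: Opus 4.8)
The plan is to separate the convex-analytic content of the statement from the algorithmic guarantee, proving the former from scratch and invoking \cite{hazan2012tightening} for the latter.

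First I would record the structure of \cref{prob:energy}. The feasible set $\L(G)$ is a nonempty, compact, convex polytope: it is closed (cut out by finitely many affine equalities and inequalities), bounded (every block $\btau_r$ is confined to the probability simplex), and nonempty because the uniform product pseudo-marginals $\btau^u_r(\x_r) = 1/n_r$ satisfy every normalization and edge-consistency constraint (summing out coordinates of a uniform vector again gives a uniform vector). The objective $F(\btau) = -\btau^{\top}\btheta - H_{\counting}(\btau)$ is continuous on $\L(G)$ (extending $t\log t$ by $0$ at $t=0$), and since $H_{\counting}$ is separable across the blocks $(\btau_r)_{r\in\V}$ with each $-H(\btau_r)$ strictly convex and each $\counting_r > 0$, the function $F$ is strictly convex. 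Hence $F$ has a unique minimizer $\hat\btau$ over $\L(G)$ (Weierstrass for existence, strict convexity for uniqueness). I would then show $\hat\btau > 0$ entrywise, i.e. $\hat\btau$ lies in the relative interior: if some entry of $\hat\btau_r$ vanished, the one-sided directional derivative of $-H_{\counting}$ along the segment from $\hat\btau$ toward the strictly positive $\btau^u$ would diverge to $-\infty$ (since $\frac{d}{dt}\, t\log t \to -\infty$ as $t\to 0^+$), while the linear term contributes a finite slope, so the objective could be strictly decreased, contradicting optimality.

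Next I would write the optimality conditions. Because $\hat\btau$ is interior, the constraints $\btau \ge 0$ are inactive, $F$ is differentiable at $\hat\btau$, and the remaining active constraints — normalization $\mathbf{1}^{\top}\btau_r = 1$ and consistency $P_{r\to s}\btau_r = \btau_s$ — are affine; so the KKT conditions are both necessary and sufficient for global optimality of this convex program. Introducing multipliers for these constraints (the ``messages'') and solving the stationarity equation $\grad_{\btau_r} F + (\text{constraint terms}) = 0$ for $\btau_r$ yields the familiar generalized belief propagation form $\hat\btau_r \propto \exp\big(\tfrac{1}{\counting_r}(\btheta_r + m_r)\big)$, where $m_r$ is the signed sum of the messages on edges incident to $r$, together with primal feasibility $\hat\btau \in \L(G)$. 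Thus \emph{solving \cref{prob:energy} is equivalent to producing any solution of this fixed-point system}.

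Finally I would invoke \cite{hazan2012tightening}: the \gbp algorithm is a message-passing procedure on $G$ whose fixed points are exactly the solutions of the KKT system above (equivalently, it performs monotone block-coordinate ascent on the Lagrangian dual, which is smooth, and which by strong duality — the constraints are affine and $\btau^u$ is a relative-interior feasible point, so the dual optimum is attained and equals the primal optimum). Since every $\counting_r > 0$ the free energy is convex, so there are no spurious stationary points and the fixed point is unique; monotone progress of the updates together with boundedness of the iterates then forces convergence to that unique fixed point, which by the equivalence above is $\hat\btau$. The only routine step is to check that the objective and constraint set in the formulation of \cite{hazan2012tightening} match those of \cref{prob:energy} for an arbitrary region graph and arbitrary positive counting numbers. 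I expect the substantive obstacle — and the reason the statement is a citation rather than an exercise — to be precisely the convergence analysis of \gbp (monotonicity of the dual updates and compactness of the iterate set); the reduction carried out in the first three paragraphs, by contrast, is standard convex analysis.
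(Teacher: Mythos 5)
The paper offers no proof of this theorem at all --- it is imported verbatim from Hazan and Shashua's work, with the \gbp\ updates merely restated as Algorithm~\ref{alg:mpconvex} in \cref{sec:approxalg} --- so your reconstruction is necessarily doing more than the paper does, and the approach is the same in the only sense that matters: both you and the authors defer the convergence analysis of the message-passing scheme to \cite{hazan2012tightening}. Your convex-analytic scaffolding (compactness and nonemptiness of $\L(G)$ via the uniform point, strict convexity from $\counting_r > 0$, interiority of the minimizer from the diverging entropy derivative, and the equivalence of the KKT system with the fixed-point equations of the final line of Algorithm~\ref{alg:mpconvex}) is correct and correctly isolates the genuinely hard step. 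One small imprecision: the \emph{messages} $\lambda$ at a fixed point need not be unique (the dual optimum is generally a nontrivial affine set, consistent with \cref{lem:inverse-mapping}'s $\u(\btau)+S_\perp$ characterization of the parameters); what is unique, by strict convexity, is the primal $\hat\btau$ recovered from any fixed point, and that is all the theorem's conclusion requires.
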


\gbp is listed in \cref{sec:approxalg} (Algorithm~\ref{alg:mpconvex}) and is a message-passing algorithm in the region graph that uses the counting numbers as weights. 
Importantly, the complexity of \gbp depends mainly on the size of the largest clique in the region graph.  In many cases of practical interest, this will be exponentially smaller in the saturated region graph than in a junction tree. 

\begin{theorem} \label{thm:approx}
When \prox uses \gbp as the \oracle (with \textbf{any} positive counting numbers $\kappa$), it solves the convex optimization problem over the local polytope of \cref{prob:convex}.
\end{theorem}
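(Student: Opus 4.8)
The plan is to show that \prox equipped with the \gbp oracle is, verbatim, the algorithm analyzed in \cite{mckenna2019graphical}, the sole changes being that the marginal polytope $\M(\V)$ is replaced by the local polytope $\L(G)$ and the Shannon entropy by the approximate entropy $H_{\counting}$; the conclusion then follows by re-reading the original convergence proof of \prox with those substitutions. The first step is to pin down what \gbp returns. By \cref{thm:energy}, on input $\btheta=(\btheta_r)_{r\in\V}$ it outputs $\hat\btau(\btheta)=\argmin_{\btau\in\L(G)}\{-\btau^{\top}\btheta-H_{\counting}(\btau)\}=\argmax_{\btau\in\L(G)}\{\btau^{\top}\btheta+H_{\counting}(\btau)\}$. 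Since every $\counting_r>0$ and each $H(\btau_r)$ is strictly concave, $H_{\counting}=\sum_{r\in\V}\counting_r H(\btau_r)$ is strictly concave, so this maximizer is \emph{unique}; the entropy terms moreover act as a barrier forcing $\hat\btau(\btheta)$ into the relative interior of $\L(G)$, where $H_{\counting}$ is differentiable.

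Next I would package this as a surrogate log-partition function: define $A_{\counting}(\btheta):=\max_{\btau\in\L(G)}\{\btau^{\top}\btheta+H_{\counting}(\btau)\}$. This is the convex conjugate of the strictly convex map $\btau\mapsto-H_{\counting}(\btau)$ on the polytope $\L(G)$; hence $A_{\counting}$ is finite and convex on the whole parameter space, and --- because that map is strictly convex and the feasible set is compact --- $A_{\counting}$ is differentiable with $\grad A_{\counting}(\btheta)=\hat\btau(\btheta)$ by Danskin's theorem. So \gbp is precisely the kind of ``marginal oracle'' that \prox is built around: it returns the gradient of a smooth convex potential that is conjugate to an entropic regularizer over a polytope. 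This mirrors the original setting, where \oracle returns $\bmu_\btheta=\grad A(\btheta)$ for $A$ the conjugate of the negative Shannon entropy over $\M(\V)$.

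With this correspondence in place, I would invoke the convergence guarantee of \prox from \cite{mckenna2019graphical} as a black box. Its proof shows that the loop of \cref{alg:proximal} --- query the oracle at the current $\hat\btheta$, then set $\hat\btheta\leftarrow\hat\btheta-\eta_t\grad L(\hat\bmu)$ --- is a dual-averaging (entropic mirror-descent) scheme for $\min_{\bmu\in\M(\V)}L(\bmu)$, which converges to a minimizer whenever $L$ is convex and the step sizes are chosen as prescribed there. That argument uses $\M(\V)$ only through its being a polytope, and the entropy only through its being strictly concave with conjugate gradient computed by the oracle. Substituting $\M(\V)\mapsto\L(G)$ and $-H\mapsto-H_{\counting}$ --- both legitimate by the previous paragraph --- the identical argument shows \prox with the \gbp oracle converges to $\argmin_{\btau\in\L(G)}L(\btau)$, which is precisely the solution of \cref{prob:convex}. (The footnote to \cref{prob:convex} lets us take $L$, $\hat\btheta$, and $\hat\bmu$ all indexed by the regions $r\in\V$, as the oracle requires.)

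The only real content beyond bookkeeping --- and hence the main obstacle --- is making this black-box step airtight: isolating the precise hypotheses the \prox analysis places on the oracle and on the entropy term, and verifying that the \gbp oracle and $H_{\counting}$ meet every one of them. The load-bearing assumption throughout is the \emph{positivity} of the counting numbers: it is what makes $H_{\counting}$ strictly concave, hence what makes the \gbp fixed point unique (so that ``$\hat\btau(\btheta)$'' is well defined), $A_{\counting}$ differentiable (so that each update is a genuine mirror step), and the mirror-descent machinery applicable. If some $\counting_r\le 0$, then $H_{\counting}$ need not even be concave, $A_{\counting}$ need not be smooth, and the correspondence --- and the theorem --- break down. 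A minor additional point to check is that convexity of $L$ suffices for the (sub)gradients $\grad L(\hat\bmu)$ to be well defined along the trajectory, which holds since each $\hat\bmu\in\L(G)$ and $L$ is convex on a neighborhood of $\L(G)$.
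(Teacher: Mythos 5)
Your overall route is the same as the paper's: interpret each iteration of \prox as a mirror-descent step over $\L(G)$ with mirror map $\psi=-H_{\counting}$, and use the fact that \gbp exactly minimizes $-\btau^{\top}\btheta-H_{\counting}(\btau)$ over $\L(G)$ (\cref{thm:energy}) to implement that step. Your observations about strict concavity of $H_{\counting}$ for positive $\counting$, uniqueness of the \gbp output, and differentiability of the surrogate log-partition function $A_{\counting}$ with $\grad A_{\counting}(\btheta)=\hat\btau(\btheta)$ are all correct and consistent with the paper's argument.

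The gap is precisely the step you flag as ``the main obstacle'' and then defer to a black-box re-reading of the original \prox analysis. The mirror-descent update at $\btau^{t}$ needs the linear term $\eta_t\grad L(\btau^{t})+\grad H_{\counting}(\btau^{t})$, whereas \prox hands \gbp the \emph{additively accumulated} parameters $\btheta^{t}-\eta_t\grad L(\btau^{t})$. These yield the same minimizer only if $-\grad H_{\counting}(\btau^{t})$ and $\btheta^{t}$ induce the same linear objective over $\L(G)$ up to a constant, and this is not automatic: the inverse map from pseudo-marginals to parameters is set-valued, and $\btheta^{t}$ is whatever the algorithm has accumulated, not a canonical dual point of $\btau^{t}$. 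The paper resolves this with \cref{lem:cvxdual} (via \cref{lem:inverse-mapping}), showing that the set of parameters mapping to $\btau^{t}$ under \gbp is exactly the projection of $-\grad H_{\counting}(\btau^{t})$ onto the subspace $S$ parallel to the affine hull of $\L(G)$, translated by the orthogonal complement of $S$; elements of that orthogonal complement contribute only an additive constant to any linear functional restricted to $\L(G)$, which is what licenses the substitution. Your appeal to the original convergence proof ``with substitutions'' does not supply this: in the exact setting the analogous identity rests on exponential-family conjugate duality over the marginal polytope with the true Shannon entropy, and it has to be re-established for the local polytope and the weighted entropy $H_{\counting}$. Supplying that subdifferential argument (or an explicit dual-averaging analysis of $A_{\counting}$ that works directly with the accumulated $\btheta^{t}$) is what turns your outline into a proof.
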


This result is remarkable in light of previous work, where different counting number schemes are used with the goal of tightly approximating the true entropy, and form the basis for different approximate inference methods. In our setting, all methods with \emph{convex} counting numbers are equivalent: they may lead to different \emph{parameters} $\hat\btheta$, but the corresponding pseudo-marginals $\hat\btau = \gbp(\hat\btheta)$ are invariant. Indeed, the optimal $\hat\btau$ depends only on the estimation objective $L(\btau)$ and the structure of the local polytope. We conjecture that a similar invariance holds for traditional marginal-based learning objectives with approximate inference~\cite{domke2013learning} when message-passing algorithms based on convex free-energy approximations are used as the approximate inference method.

\begin{proof}
Since $L$ is a convex function and $\L$ is a convex constraint set, this problem can be solved with mirror descent \cite{beck2003mirror}.  Each iteration of mirror descent requires solving subproblems of the form:
$$\btau^{t+1} = \argmin_{\btau \in \L} \btau^{\top} \grad L(\btau^t) + \frac{1}{\eta_t} D(\btau, \btau^t), \quad\quad D(\btau, \btau^t) = \psi(\btau) - \psi(\btau^t) - (\btau - \btau^t)^{\top} \grad \psi(\btau^t).$$
Here, $D$ is a Bregman distance measure and $\psi$ is some strongly convex and continuously differentiable function.  Setting $ \psi = -H_{\counting}$, a negative weighted entropy with any (strongly) convex counting numbers $\counting$, we arrive at the following update equation:
\begin{align*}
\btau^{t+1} &= \argmin_{\btau \in \L(G)} \btau^{\top} \grad L(\btau^t) + \frac{1}{\eta_t} \Big( -H_{\counting}(\btau) + H_{\counting}(\btau^t) + (\btau - \btau^t)^{\top} \grad H_{\counting}(\btau^t) \Big) \\
&= \argmin_{\btau \in \L(G)} \btau^{\top} \Big(\eta_t \grad L(\btau^t) + \grad H_{\counting}(\btau^t) \Big) - H_{\counting}(\btau) \tag{algebraic manipulation} \\
&= \argmin_{\btau \in \L(G)} \btau^{\top} \Big(\eta_t \grad L(\btau^t) - \btheta^t \Big) - H_{\counting}(\btau) \tag{\cref{lem:cvxdual}; \cref{sec:approxalg}} \\
&= \gbp \big(G,\, \btheta^t -\eta_t \grad L(\btau^t),\, \counting \big) \tag{\cref{thm:energy}}
\end{align*}
\end{proof}

\paragraph{Inference: Answering New Queries} 
We now turn our attention to the problem of inference.  The central challenge is to estimate out-of-model marginals. Let $\hat\btau$ and $\hat\btheta$ be the estimated pseudo-marginals and corresponding parameters after running \prox with \gbp and region graph $G$. We have $\hat\btau \approx \bmu_0$, and want an estimate $\hat\btau_r \approx \bmu_{0, r}$ where $r \not \in \V$. 

\gbp is the mapping such that $\hat\btau = \gbp(\hat\btheta) \approx \bmu_0$. Thus, it is appropriate to use \gbp with estimated parameters $\hat\btheta$ as the basis for estimating new pseudo-marginals. This requires selecting an expanded region graph $G'$ that supports $r$ and new counting number $\kappa_r$.
In \cref{sec:extraout}, we analyze this approach for an idealized setting and find that it leads to estimates $\hat\btau_r$ that maximize the entropy $H(\hat\btau_r)$ subject to $\hat\btau_r$ being consistent with $\hat\btau$ on overlapping marginals. However, there are two practical difficulties with the idealized setting. First, there may be \emph{no} $\hat\btau_r$ that is consistent with $\hat\btau$ on overlapping marginals: this is because $\hat\btau$ satisfies only local consistency constraints. Second, the idealized case uses $\kappa_r$ very close to zero, and \gbp performs poorly in this case. Instead, we design an optimization algorithm to mimic the idealized setting:

\begin{problem}[Maximize Entropy Subject to Minimizing Constraint Violation] \label{prob:project}
Let $\hat\btau = (\hat\btau_u)_{u \in \V}$ and let $r \notin \V$. Solve:
$$
\max_{\hat\btau_r} H(\hat\btau_r) \text{ subject to } \hat\btau_r \in \argmin_{\btau_r \in \S} \sum_{u \in \V, s=u\cap r}\|P_{r \to s}\btau_r - P_{u \to s}\hat\btau_u \|_2^2.
$$
\end{problem}
This relaxes the constraint that $\hat\btau_r$ agrees with $\hat\btau$ on overlapping marginals, to instead minimize the $L_2$ constraint violation. The inner problem is a quadratic minimization problem over the probability simplex $\S$. We show in \cref{sec:extraout} that a maximizer of \cref{prob:project} is obtained by solving the inner problem once using entropic mirror descent~\cite{beck2003mirror}.

The advantages of \cref{prob:project} are that it is low-dimensional, only requires information from $\hat\btau$ about attributes that are in $r$, can be solved much more quickly than running \gbp, and can be solved in parallel for different marginals $r, r'$. This also gives a \emph{fully} variational approach: both estimation and inference are fully defined through convex optimization problems that can be solved efficiently, and whose solutions are invariant to details of the approximate inference routines such as counting numbers.


\section{Experiments} \label{sec:experiments}

\paragraph{Comparison to \pgm in a Simple Mechanism}

We begin by comparing the accuracy and scalability of \apgm and \pgm for estimating a fixed workload of marginal queries from noisy measurements of those marginals made by the Laplace mechanism.
We use synthetic data to control the data domain and distribution (details in \cref{sec:synthetic}) and measure $k$ different 3-way marginals with $\epsilon=1$, for different settings of $k$.
We run \prox with different versions of \oracle: \mysf{Exact}, \mysf{Saturated Region Graph}, and \mysf{Factor Graph}, where the first corresponds to \pgm, and the latter two to \apgm  with the corresponding region graph (\cref{fig:regions}).


\textit{Accuracy.} We first show that when exact inference is tractable, some accuracy is lost by using approximate inference, but the estimated pseudo-marginals are much better than the noisy ones.
We use eight-dimensional data with $n_1 = \dots = n_8 = 4$, which is small enough so exact inference is always tractable, and measure random 3-way marginals for $k$ from $1$ to $\binom{8}{3}=56$.
We then run $10000$ iterations of \prox using differing choices for \oracle, 
and report the $L_1$ error on in- and out-of-model marginals, averaged over all cliques and across five trials.



In \cref{fig:inclique}, we see that the error of all \prox variants is always lower than the error of the noisy measurements themselves. \mysf{Exact} (\pgm) always has lowest error, followed by \mysf{Saturated Region Graph}, then \mysf{Factor Graph}. This matches expectations: richer region graph structures encode more of the actual constraints and hence provide better error. 
The trend is similar for out-of-model marginals (\cref{fig:outclique}). \mysf{Factor Graph}, which only enforces consistency with respect to the one-way marginals, performs poorly on unmeasured cliques, while \mysf{Saturated Region Graph} and \mysf{Exact}, which enforce more constraints, do substantially better. The difference between \mysf{Saturated Region Graph} and \mysf{Exact} is smaller, but meaningful.

\begin{figure}
  \centering
  \begin{subfigure}[b]{0.33\textwidth}
    \centering\includegraphics[width=\textwidth]{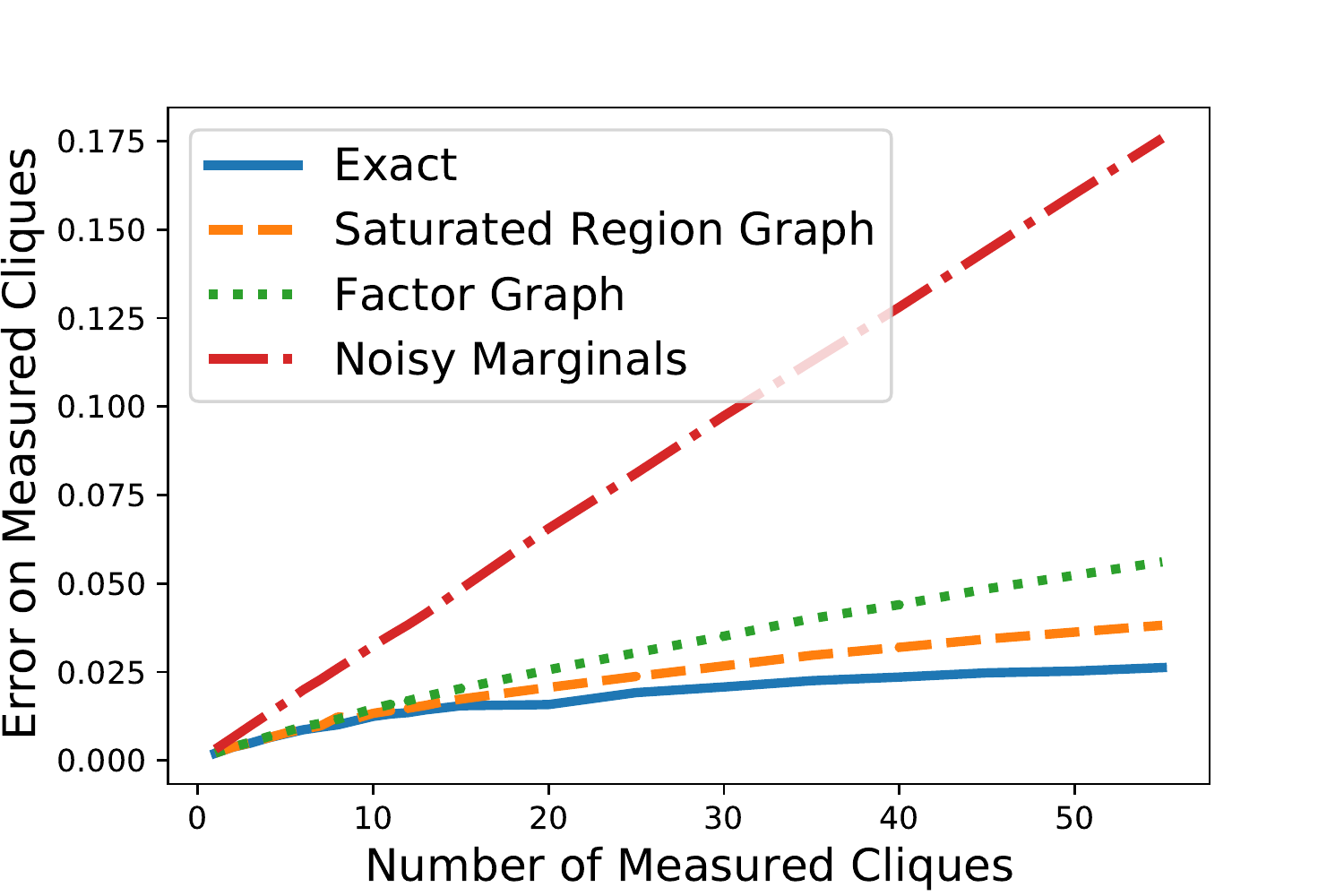}
    \caption{\label{fig:inclique} In-Model Marginals}
  \end{subfigure}%
  \begin{subfigure}[b]{0.33\textwidth}
    \centering\includegraphics[width=\textwidth]{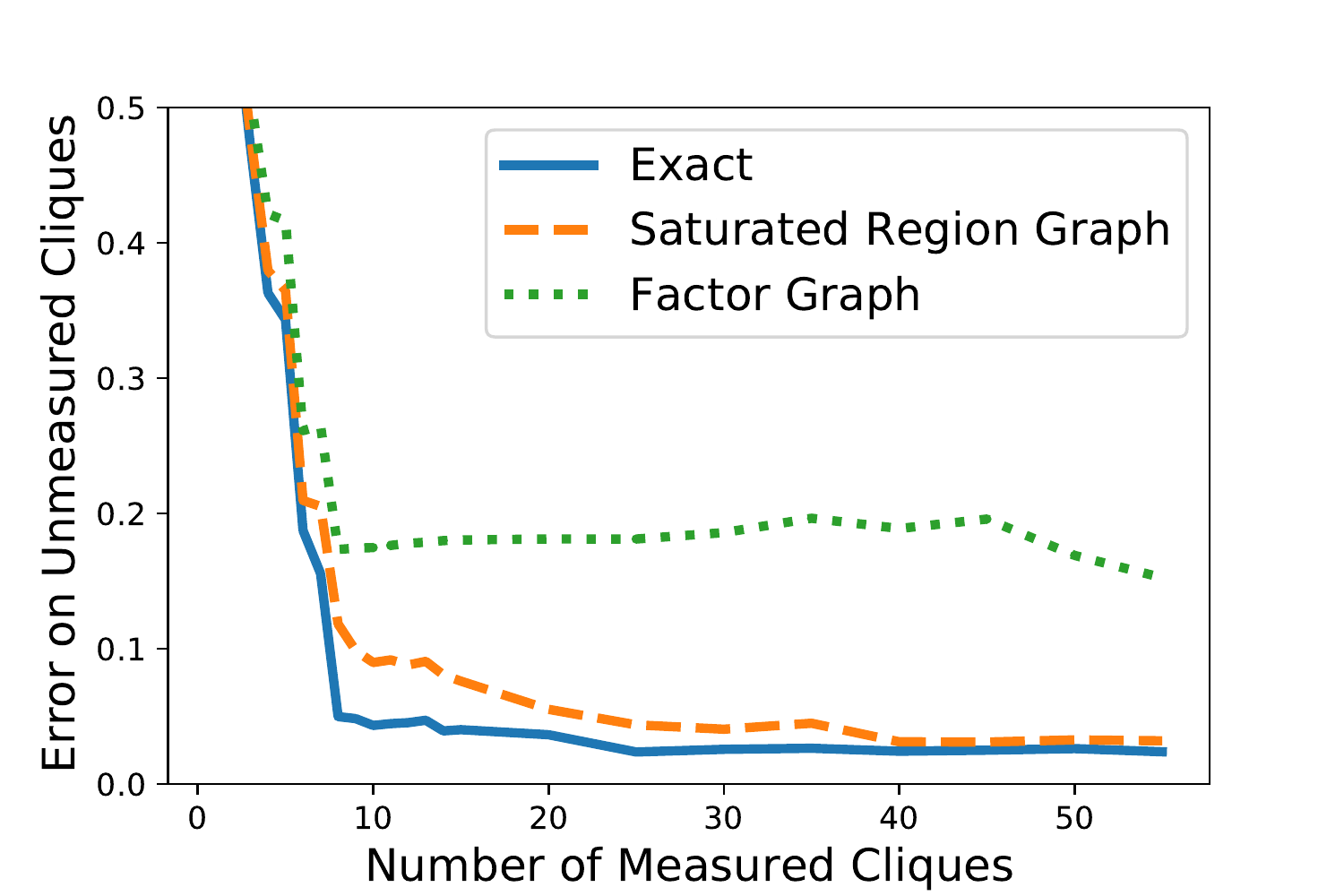}
    \caption{\label{fig:outclique} Out-of-Model Marginals}
  \end{subfigure}%
  \begin{subfigure}[b]{0.33\textwidth}
    \centering\includegraphics[width=\textwidth]{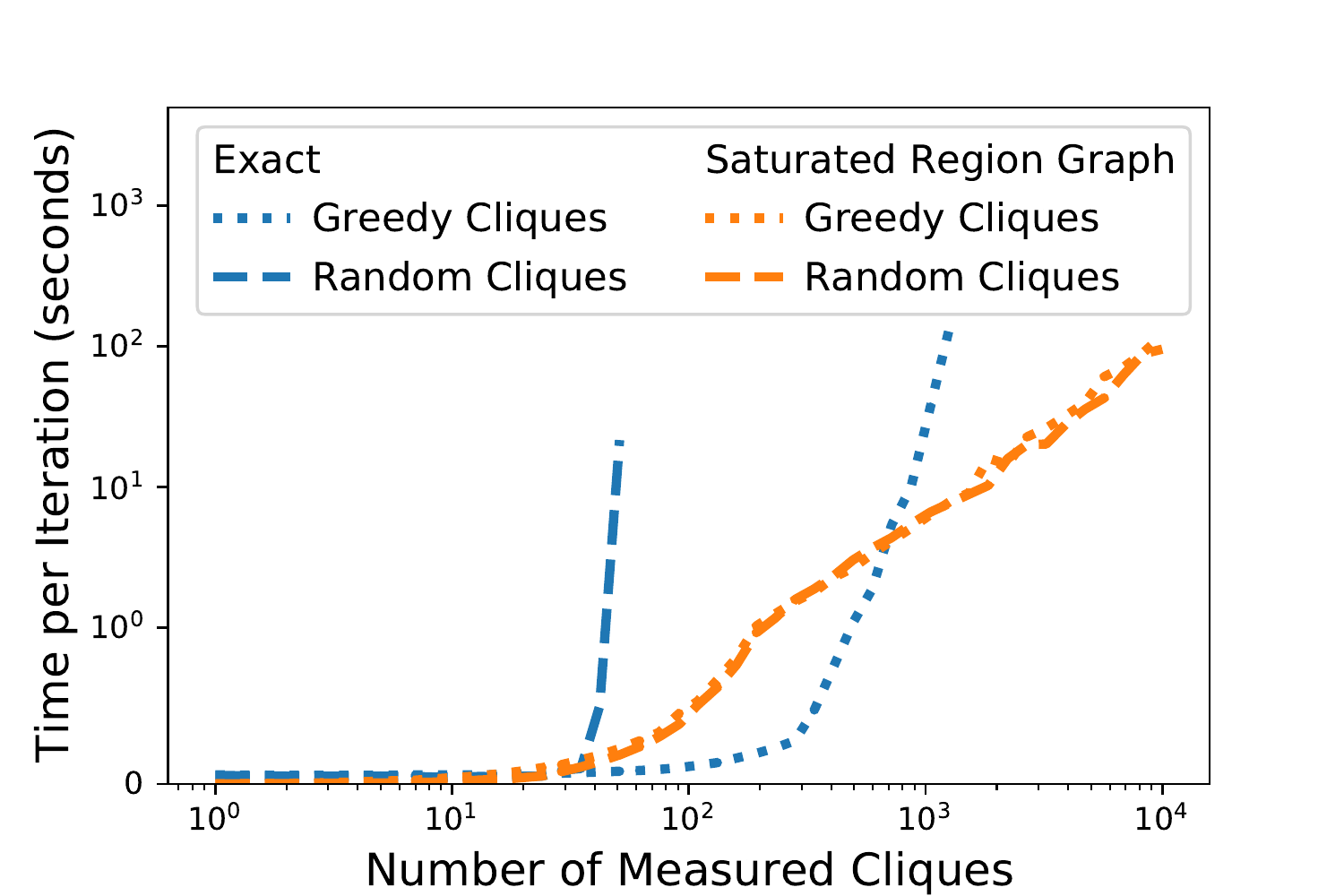}
    \caption{\label{fig:scalability} Scalability}
  \end{subfigure}%
  \caption{ \label{fig:exact_vs_approx} Comparison between \prox  with exact and approximate inference (for different region graph structures):  (a) error on in-model (measured) marginals, (b) error on out-of-model (unmeasured) marginals, and (c) scalability of \prox vs. number of measured marginals.
  }
\end{figure}

\textit{Scalability.} 
Next we consider high-dimensional data and compare the scalability of \mysf{Exact} and \mysf{Saturated Region Graph} on $100$-dimensional data with $n_1 = \dots = n_{100} = 10$.  We vary $k$ from $1$ to $10^4$
and calculate the per-iteration time of \prox.\footnote{Scalability experiments were conducted on two cores of a machine with a 2.4GHz CPU and 16 GB of RAM.}
We consider two schemes for selecting measured cliques: \emph{random} selects triples of attributes uniformly at random, and \emph{greedy} selects triples to minimize the junction tree size in each iteration.
As shown in \cref{fig:scalability}, \mysf{Exact} can handle about $50$ random measured cliques or $1000$ greedy ones before the per-iteration time becomes too expensive (the growth rate is exponential). In contrast, \mysf{Saturated Region Graph} runs with $10000$ measured cliques for either strategy and could run on larger cases (the growth rate is linear).  

\paragraph{Improving Scalability and Accuracy in Sophisticated Mechanisms}
We show next how \prox can be used to improve the performance of two sophisticated mechanisms for answering complex workloads of linear queries, including marginals. HDMM~\cite{mckenna2018optimizing} is a state-of-the-art algorithm that first selects a ``strategy'' set of (weighted) marginal queries to be measured, and then reconstructs answers to workload queries. MWEM~\cite{hardt2012simple} is another competitive mechanism that iteratively measures poorly approximated queries to improve a data distribution approximation. In each algorithm, the bottleneck in high dimensions is estimating a data distribution~$\hat\p$, which is prohibitive to do explicitly. \pgm can extend each algorithm to run in higher dimensions~\cite{mckenna2019graphical}, but still becomes infeasible with enough dimensions and measurements. HDMM is a ``batch'' algorithm and either can or cannot run for a particular workload. Because MWEM iteratively selects measurements, even in high dimensions it can run for some number of iterations before the graphical model structure becomes too complex. By using \apgm instead, HDMM can run for workloads that were previously impossible, and MWEM can run for any number of rounds. 
We use the \textsc{fire} dataset from the 2018 NIST synthetic data competition~\cite{ridgeway2021challenge}, which includes 15 attributes and $m \approx 300,\!000$ individuals.

For HDMM, we consider workloads of $k$ random 3-way marginals, for $k=1,2,4,8,\dots,256,455$, run five trials, and report root mean squared error, the objective function that HDMM optimizes.
\cref{fig:hdmm} shows the results.
HDMM with a full data vector cannot run for $k>2$, but we can still analytically compute the \emph{expected error} if it were able to run.
HDMM\,+\,\mysf{Exact} fails to run beyond $k=16$, while HDMM\,+\,\mysf{Region Graph} is able to run in every setting, substantially expanding the range of settings in which HDMM can be used.
Both variants offer the significant error improvements of PGM-style inference, because they impose non-negativity constraints that reduce error. For example,  when $k=455$, there is a $3 \times $ reduction in RMSE.




\begin{figure}
  \centering
  \begin{subfigure}[b]{0.33\textwidth}
    \centering\includegraphics[width=\textwidth]{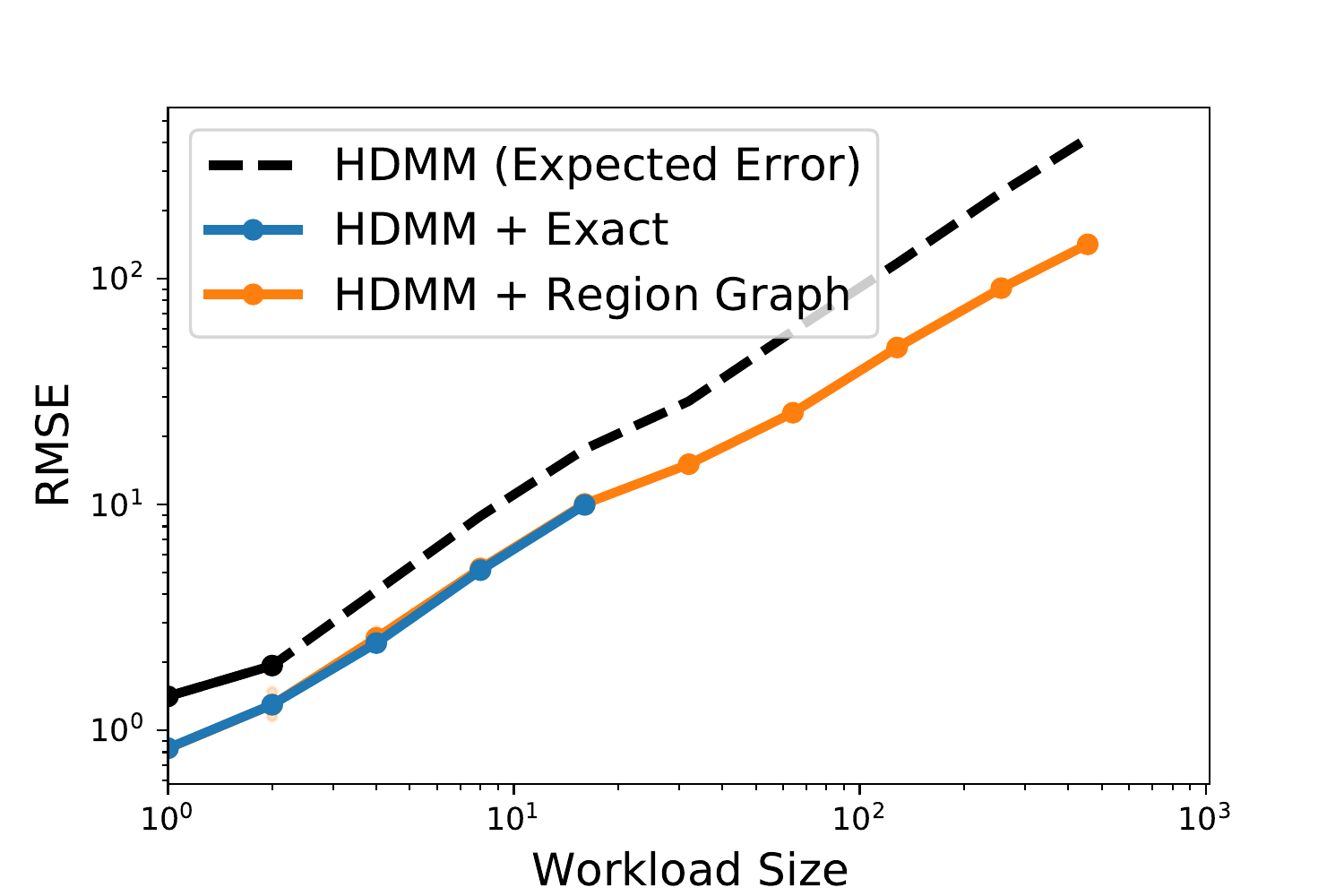}
    \caption{\label{fig:hdmm} HDMM -- \textsc{fire}}
  \end{subfigure}%
  \begin{subfigure}[b]{0.33\textwidth}
    \centering\includegraphics[width=\textwidth]{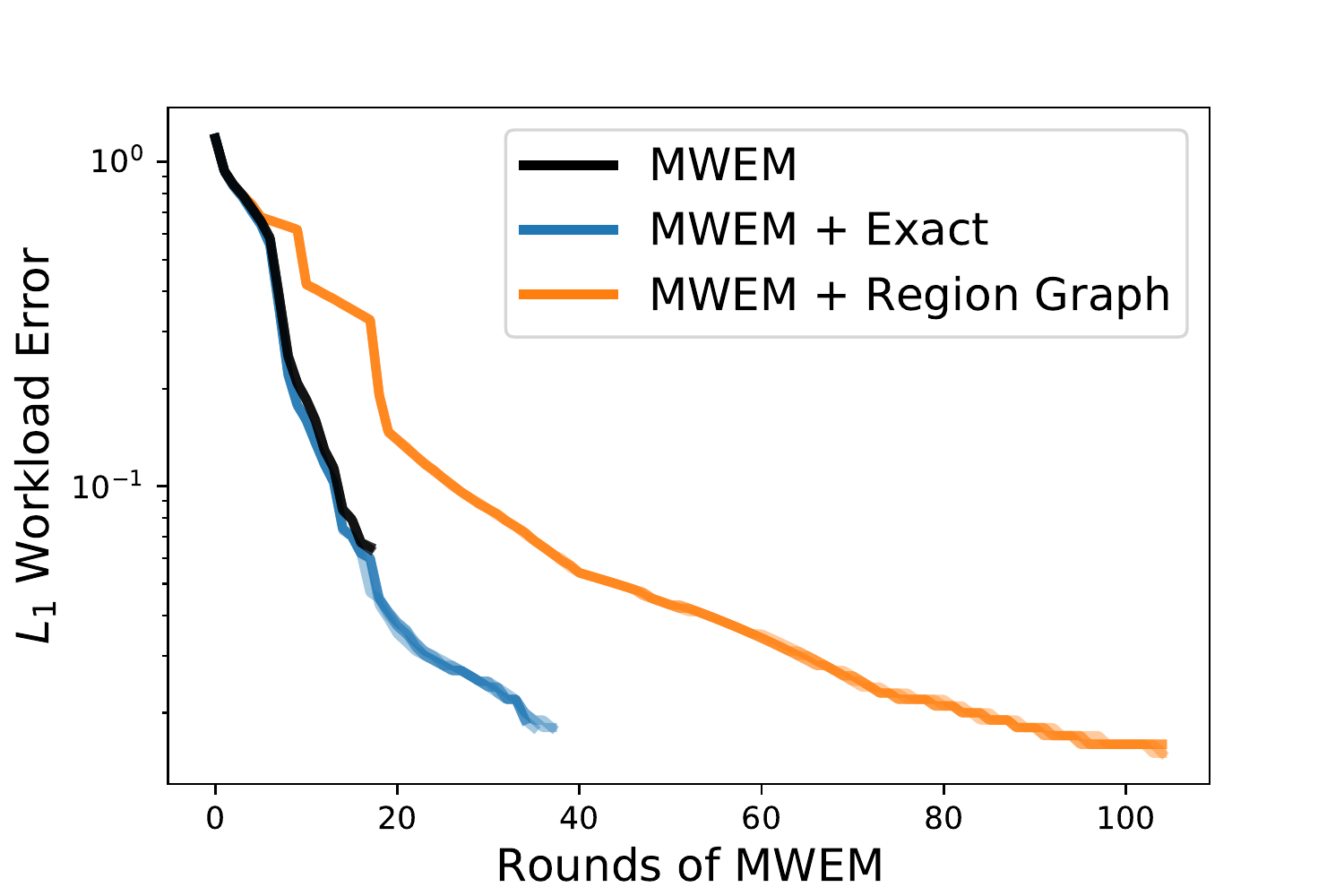}
    \caption{\label{fig:mwem} MWEM -- \textsc{fire}}
  \end{subfigure}%
  \begin{subfigure}[b]{0.33\textwidth}
    \centering\includegraphics[width=\textwidth]{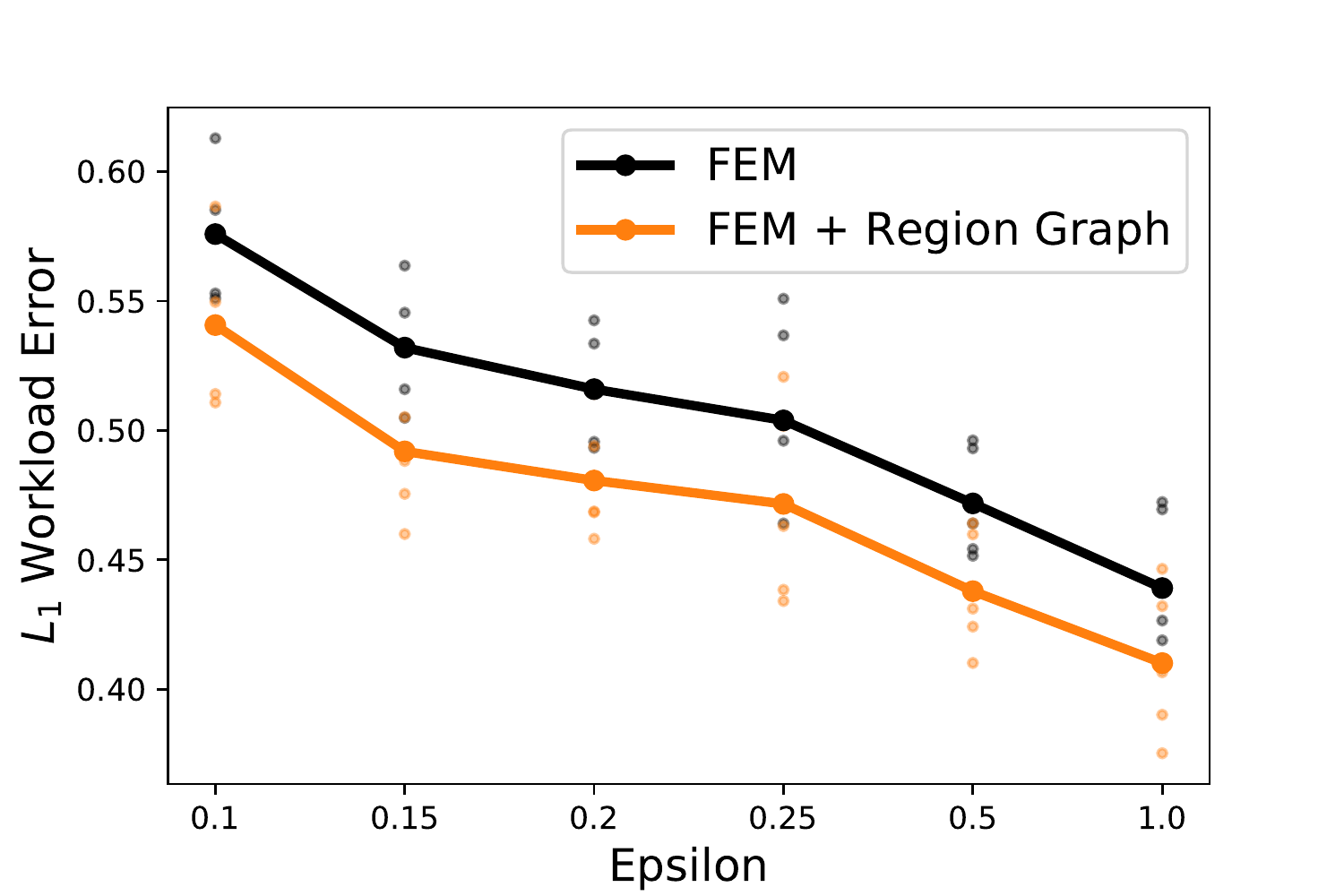}
    \caption{\label{fig:fem} FEM -- \textsc{adult}}
  \end{subfigure}%
\caption{Three examples of using \prox to improve scalability and accuracy of other algorithms.}
\end{figure}


For MWEM, we consider the workload of all $2$-way marginals, use a privacy budget of $\epsilon = 0.1$ \emph{per round}, and run for as may rounds as possible, until MWEM has measured all $2$-way marginals or exceeds a generous time/memory limit of 24 hours and 16GB. \cref{fig:mwem} shows the results. As expected, MWEM\,+\,\mysf{Exact} runs successfully in early iterations, but exceeds resource limits by 35--40 rounds. In comparison, MWEM\,+\,\mysf{Region Graph} can run to completion and eventually measure all 2-way marginals. For a fixed number of rounds, \mysf{Exact} has lower error, in this case, substantially so, but results are data-dependent (we evaluate with other datasets in \cref{sec:extraexp}). The difference can largely be traced to \mysf{Exact}'s better performance on out-of-model cliques. In contrast, HDMM's measurements support all workload queries, so no out-of-model inference is required, and we see little gap between exact and approximate inference. Improving performance of approximate inference for out-of-model marginals is an area to be considered for future work; see \cref{sec:extraout}.
 
\textbf{Additional experiments\ } We also apply \appgm to improve the accuracy of FEM, a recent state-of-the-art query-answering mechanism~\cite{vietri2020new}. The setup is similar to the DualQuery experiment in~\cite{mckenna2019graphical}: we run FEM to completion to release $\y$, but instead of answering queries directly with $\y$, we use \appgm to estimate pseudo-marginals, from which we compute query answers. This leads to a modest error reduction for all $\epsilon$ (\cref{fig:fem}; details in \cref{sec:extraexp}). In \cref{sec:extraexp} we also compare \pgm and \appgm directly to a method proposed in PriView~\cite{qardaji2014priview} for estimating consistent marginals, and find that PGM-based methods are more accurate for almost all values of $\epsilon$.  We additionally compare \pgm and \appgm to the recent Relaxed Projection method~\cite{aydore2021differentially}, and found that \appgm performs better for $\epsilon > 0.1$, although is outperformed for $\epsilon \leq 0.1$.  

\section{Practical Considerations and Limitations} \label{sec:limitations}
When using our approach in practice, there are several implementation issues to consider.  First note that \gbp is an iterative algorithm that potentially requires many iterations to solve \cref{prob:energy}, and this marginal inference routine is called within each iteration of \prox.  Our theory requires running \gbp until convergence, but in practice we only need to run it for a fixed number of iterations.   In fact, we find that by \emph{warm starting} the messages in \gbp to the values from the previous call, we can actually run only one inner iteration of \gbp within each outer iteration of \prox.  This approach works remarkably well and makes much faster progress on reducing the objective than using more inner iterations.  The main drawback of this is that we can no longer rely on a line search to find an appropriate step size within \prox, since one iteration of \gbp with warm starting is not necessarily a descent direction.  Using a constant step size works well most of the time, but selecting that step size can be tricky.  We utilize a simple heuristic that seems to work well in most settings, but may require manual tuning in some cases.  Second, we observed that introducing damping into \gbp improved its stability and robustness, especially for very dense region graphs.  Finally our MWEM experiment revealed that it is better to use \pgm over \appgm in the context of an MWEM-style algorithm, even though \pgm is more limited in the number of rounds it can run for. This performance difference can be traced back to our method for out-of-model inference, where utilization of local information only can lead to poor estimates.  We discuss alternative approaches for this problem in \cref{sec:extraout}.

\section{Related Work} \label{sec:related}

A number of recent approaches support reconstruction for measurements on high-dimensional data. As part of the PriView algorithm~\cite{qardaji2014priview}, the authors describe a method for resolving inconsistencies in noisy measured marginals, which has since been incorporated into other mechanisms~\cite{chen2015differentially,zhang2018calm,chowdhury2020data,zhang2021privsyn}. Like \apgm, their method only guarantees local consistency. In \cref{sec:extraexp}, we show empirically that it achieves similar (but slightly worse) error than \apgm. In addition, the method is less general, as measured queries may only be marginals, while \apgm allows an arbitrary convex loss function to be specified over the marginals. This extra generality is critical for  integrating with mechanisms like HDMM, FEM, and MWEM when the workload contains more complex linear queries.


In concurrent work, Aydore et al.~\cite{aydore2021differentially} and Liu et al.~\cite{liu2021iterative} proposed scalable instantiations of the MWEM algorithm, both avoiding the data vector representation in favor of novel compact representations.  Although originally described in the context of MWEM-style algorithms, the key ideas presented in these works can be abstracted to the more general setting considered in this work.  Specifically, these methods can be seen as alternatives to \appgm for overcoming the scalability limitations of \pgm; each of these methods make different approximations to overcome the inherent hardness of \cref{prob:marginal}.  A direct comparison between \pgm or \appgm, and these alternatives remains an interesting question for future research.  

To avoid the data vector representation, Liu et al.~\cite{liu2021leveraging} restrict the support of the data vector to the domain elements that appear in a public dataset. This is much more scalable, but could result in poor performance if the public domain and the input domain differ substantially. 

Lastly, Dwork et al.~\cite{dwork2015efficient} propose an algorithm similar to \appgm.
Their approach also projects onto an outer approximation of the marginal polytope, using the Frank Wolfe algorithm. The outer approximation is constructed via geometric techniques and is different from the local polytope we consider. They prove favorable error bounds with polynomial running time, but leave open the implementation and evaluation of their approach. By using the local polytope and message-passing algorithms, our method can scale in practice to problems with millions of variables.


\medskip

\section*{Acknowledgements}

This work was supported by the National Science Foundation under grant IIS-1749854, by DARPA and SPAWAR under contract N66001-15-C-4067, and by Oracle Labs, part of Oracle America, through a gift to the University of Massachusetts Amherst in support of academic research.


\bibliography{refs}
\bibliographystyle{unsrtnat}

\newpage
\appendix
\section{Approximate Marginal Inference Algorithm} \label{sec:approxalg}

\begin{algorithm}
\caption{\gbp: Convex Generalized Belief Propagation \cite{hazan2012tightening}} \label{alg:mpconvex}
\begin{algorithmic}
\STATE {\bfseries Input:} Region Graph $G=(\V,\E)$, parameters $\btheta = (\btheta_r)_{r \in \V}$, convex counting numbers $\counting_r > 0$
\STATE {\bfseries Output:} Model marginals $\btau = (\btau_r)_{r \in \C}$
\STATE $\counting_{r,t} = \counting_r / (\counting_t + \sum_{p \rightarrow t} \counting_p)$
\STATE Initialize $m_{r\rightarrow t}(\x_t) = 0$ and $\lambda_{t \rightarrow r}(\x_t) = 0$
\FOR{$i = 1,\dots$}
\FOR{$r \rightarrow t$}
\STATE $m_{r \rightarrow t}(\x_t) = \counting_r \log{\Big( \sum_{\x_r \setminus \x_t} \exp{\Big( (\btheta_r(\x_r) + \sum_{c \neq r} \lambda_{c \rightarrow r}(\x_c) - \sum_{p} \lambda_{r \rightarrow p}}(\x_p))/\counting_r)\Big)\Big)}$
\STATE $\lambda_{t \rightarrow r}(\x_t) = \counting_{r,t} \Big( \btheta_t(\x_t) + \sum_{c} \lambda_{c \rightarrow t}(\x_t) + \sum_{p} m_{p \rightarrow t}(\x_t) \Big) - m_{t \rightarrow r}(\x_t)$
\ENDFOR
\ENDFOR
\FOR{$r \in \C$}
\STATE $\btau_r(\x_r) \propto \exp{\Big((\btheta_r(\x_r) + \sum_{t} \lambda_{t \rightarrow r}(\x_t) - \sum_p \lambda_{r \rightarrow p}(\x_r))/\counting_r\Big)}$
\ENDFOR
\RETURN $\btau = (\btau_r)_{r \in \V}$
\end{algorithmic}
\end{algorithm}

\renewcommand{\v}{\mathbf{v}}
\renewcommand{\u}{\mathbf{u}}

\begin{lemma} \label{lem:cvxdual}
Let $G$ be a region graph, let $\counting$ be positive counting numbers, and suppose $\hat\btau = \gbp(G, \hat\btheta, \kappa)$ for parameters $\hat\btheta$.  Then, for any vector $\z$:
$$
\argmin_{\btau \in \L(G)} -\btau^\top \left(-\grad H_\counting(\hat\btau) + \z\right) - H_\counting(\btau) 
= \argmin_{\btau \in \L(G)} -\btau^\top \big(\hat\btheta + \z\big) - H_\counting(\btau)
$$
\end{lemma}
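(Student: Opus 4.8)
The plan is to show that the two optimization problems have the same optimal solution by establishing that $-\grad H_\counting(\hat\btau) = \hat\btheta$ modulo a vector that lies in the orthogonal complement of the linear span of the feasible set $\L(G)$ (in the affine sense), so it does not affect the argmin. More precisely, I would first recall what $\gbp$ produces: running $\gbp(G,\hat\btheta,\counting)$ returns pseudo-marginals $\hat\btau$ that are exactly the minimizer of the approximate free energy problem (\cref{prob:energy}) with parameters $\hat\btheta$, i.e. $\hat\btau = \argmin_{\btau \in \L(G)} -\btau^\top \hat\btheta - H_\counting(\btau)$, by \cref{thm:energy}. The key consequence is a first-order stationarity (KKT) condition at $\hat\btau$: the gradient of the objective, $-\hat\btheta - \grad H_\counting(\hat\btau)$, must be orthogonal to all feasible directions of $\L(G)$ at $\hat\btau$. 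Since $\hat\btau$ has strictly positive entries (a standard property of the $\gbp$ fixed point with positive counting numbers, because of the $\log$ in the entropy and the exponential form of the updates), the active constraints at $\hat\btau$ are exactly the equality constraints defining $\L(G)$, so the feasible cone of directions is the linear subspace $\{ \v : \mathbf{1}^\top \v_r = 0 \ \forall r, \ P_{r\to s}\v_r = \v_s \ \forall r\to s\in\E \}$. Call this subspace $V_0$; it is the tangent space of the affine hull of $\L(G)$. Stationarity then says $-\hat\btheta - \grad H_\counting(\hat\btau) \in V_0^\perp$, equivalently $-\grad H_\counting(\hat\btau) = \hat\btheta + \w$ for some $\w \in V_0^\perp$.

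Next I would use this to rewrite the left-hand objective. We have
\[
-\btau^\top\!\left(-\grad H_\counting(\hat\btau) + \z\right) - H_\counting(\btau)
= -\btau^\top\!\left(\hat\btheta + \w + \z\right) - H_\counting(\btau).
\]
Now for any two feasible points $\btau,\btau' \in \L(G)$ we have $\btau - \btau' \in V_0$, hence $\w^\top(\btau - \btau') = 0$; in other words $\w^\top\btau$ is a constant over the feasible set and does not affect the argmin. Dropping this constant term gives $\argmin_{\btau\in\L(G)} -\btau^\top(\hat\btheta + \z) - H_\counting(\btau)$, which is exactly the right-hand side. Since the objective is strictly convex in $\btau$ on the feasible set (the weighted entropy $H_\counting$ with strongly convex counting numbers is strictly concave along directions in $V_0$, by the same argument that makes \cref{prob:energy} have a unique solution), both argmins are singletons, so set equality of the minimizers follows.

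The main obstacle — and the step that needs to be written carefully — is the passage through the KKT/stationarity condition: I need the feasibility of writing $-\grad H_\counting(\hat\btau) - \hat\btheta$ as an element of $V_0^\perp$, which requires (i) that $\hat\btau$ is interior to the nonnegativity constraints (strict positivity of the $\gbp$ output, which I would justify from the multiplicative form of Algorithm~\ref{alg:mpconvex} and positivity of $\counting$), and (ii) a constraint qualification so that KKT holds for \cref{prob:energy} — this is immediate since all remaining constraints are linear. A secondary subtlety is that $H_\counting$ need only be strongly convex (after sign flip) on the relevant subspace $V_0$, not globally; with "convex counting numbers" this is exactly the guarantee that \cref{prob:energy} is a well-posed strictly convex program, which is already invoked in the proof of \cref{thm:approx}, so I would simply cite that same property. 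Everything else is routine linear algebra with the $\L(G)$ constraints.
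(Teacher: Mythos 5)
Your proposal is correct and takes essentially the same route as the paper: both arguments use first-order optimality of $\hat\btau$ for the free-energy problem at an interior point (with the strict positivity $\hat\btau > \mathbf{0}$ coming from the exponential form of \gbp) to conclude that $-\grad H_\counting(\hat\btau)$ and $\hat\btheta$ differ by a vector orthogonal to the affine hull of $\L(G)$, and then observe that such a vector contributes only a constant to the linear term over the feasible set. The paper packages the stationarity step as a separate lemma characterizing the full preimage $\Theta(\btau)$ via the subdifferential of the restricted negative entropy, but the content is identical to your KKT argument.
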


For the remainder of this section, let $S$ be the linear subspace parallel to the affine hull of $\L(G)$, and let $S_\perp$ be the orthogonal complement of $S$. That is, if we write $\L(G) = \{\btau \geq \mathbf{0}: A \btau = \mathbf{b}\}$ using the constraint matrix $A$, then $S$ is the null space of $A$. This means that for any $\mathbf{0} < \btau \in \L(G)$ and $\z \in S$, there is some $\lambda > 0$ such that $\btau + \lambda \z \in \L(G)$. On the other hand, if $\z \in \L(G)$ and $\z \notin S$, there is no $\lambda > 0$ such that $\btau + \lambda \z \in \L(G)$.

\begin{proof}
  Because $\hat\btau = \gbp(G, \hat\btheta, \counting)$ we know that $\hat\btau$ minimizes $-\tilde\btau^\top\hat\btheta - H_\counting(\tilde\btau)$ over all $\tilde\btau \in \L(G)$, and it is easy to see from the final line of \gbp that $\hat\btau > \mathbf{0}$. Therefore, we can apply \cref{lem:inverse-mapping} below to conclude that there are vectors $\v_\perp, \v_\perp' \in S_\perp$ such that
$$
\begin{aligned}
-\grad H_\counting(\hat\btau) &= \u(\hat\btau) + \v_\perp \\
\hat\btheta &= \u(\hat\btau) + \v_\perp' 
\end{aligned}
$$
where $\u(\hat\btau)$ is the projection of $-\grad H_\counting(\hat\btau)$ onto $S$.

We will now show that the linear parts of the objectives of the two minimization problems in the lemma statement differ by only a constant. Since the nonlinear part is the same, this will prove that the objectives as a whole differ by only a constant, so the problems have the same minimizers, as stated.

Let $\z = \z_\parallel + \z_\perp$ where $\z_\parallel \in S$ and $\z_\perp \in S_\perp$. Then, for any $\btau \in \L(G)$, the linear component of the first objective is
\begin{equation}
  \label{eq:dot1}
  -\btau^\top(-\nabla H_\kappa(\btau) + \z) = -\btau^\top(\u(\hat\btau)+ \v_\perp + \z_\parallel + \z_\perp) = -\btau^\top(\u(\hat\btau) + \z_\parallel) + c
\end{equation}
where $c = -\btau^\top(\v_\perp  + \z_\perp)$ is a constant that does not depend on $\btau$, since, for any $\btau, \btau' \in \L(G)$ we have
$$
\btau'^\top (\v_\perp + \z_\perp) - \btau^\top (\v_\perp + \z_\perp)  = (\btau' -\btau)^\top (\v_\perp + \z_\perp) = 0,
$$
since $\btau' - \btau \in S$ and $\v_\perp + \z_\perp \in S_\perp$. 

Similarly, the linear component of the second objective is
\begin{equation}
  \label{eq:dot2}
  -\btau^\top(\hat\btheta + \z) = -\btau^\top(\u(\hat\btau) + \v'_\perp + \z_\parallel + \z_\perp) = -\btau^\top(\u(\hat\btau) + \z_\parallel) + c'
\end{equation}
where $c' = -\btau^T(\v'_\perp + \z_\perp)$ is a (different) constant independent of $\btau$.

This shows that the objectives differ by a constant, and completes the proof. 

\cref{eq:dot1} and \cref{eq:dot2} show that the objectives differ by a constant, which completes the proof. 
\end{proof}

\begin{lemma} \label{lem:inverse-mapping} Let $G$ be a region graph, let $\counting$ be positive counting numbers, and let $\btau \in \L(G)$ with $\btau > \mathbf{0}$. Define $\Theta(\btau) = \{\btheta: \btau = \argmin_{\tilde\btau \in \L(G)} -\tilde\btau^\top \btheta - H_\counting(\tilde\btau)\}$ to be the set of all $\btheta$ such that $\gbp(G, \btheta, \counting) = \btau$. Then 
$$
\Theta(\btau) = \u(\btau) + S_\perp
$$
where $\u(\btau)$ is the projection of  $-\nabla H_\counting(\btau)$ onto $S$. 
\end{lemma}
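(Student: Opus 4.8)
The plan is to identify $\Theta(\btau)$ with the set of $\btheta$ for which the first-order optimality conditions of the strictly convex program defining \gbp hold at the given point $\btau$. First I would note that for positive counting numbers $\counting > \mathbf 0$ the objective $f_\btheta(\tilde\btau) := -\tilde\btau^\top\btheta - H_\counting(\tilde\btau)$ is strictly convex: it is a linear term plus the nonnegative combination $\sum_{r\in\V}\counting_r\big(-H(\tilde\btau_r)\big)$ of the strictly convex block negative-entropies, and any two distinct pseudo-marginal vectors differ in at least one block. Since $\L(G)$ is compact and $f_\btheta$ continuous, the minimizer exists and, by strict convexity, is unique; hence $\gbp(G,\btheta,\counting)$ is well-defined for every $\btheta$, and $\Theta(\btau)$ is exactly the set of $\btheta$ whose unique minimizer over $\L(G)$ equals $\btau$.

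Next I would use the hypothesis $\btau > \mathbf 0$ to reduce the optimality conditions to an orthogonality statement. Write $\L(G) = \{\tilde\btau \geq \mathbf 0 : A\tilde\btau = \mathbf b\}$, so that $S = \mathrm{null}(A)$ is the tangent space of the affine hull and $S_\perp = \mathrm{range}(A^\top)$. Because $\btau$ is strictly positive, every nonnegativity constraint is inactive at $\btau$, every direction $d \in S$ is feasible from $\btau$ (as $A(\btau + td) = \mathbf b$ and $\btau + td > \mathbf 0$ for small $t$), and $H_\counting$ is differentiable at $\btau$ since the Shannon entropy is smooth on the strictly positive orthant. The standard first-order condition for minimizing a convex differentiable function over $\L(G)$ at an interior-feasible point then gives: $\btau$ is the global minimizer of $f_\btheta$ over $\L(G)$ if and only if $\nabla f_\btheta(\btau) \in S_\perp$ (necessity because $\langle \nabla f_\btheta(\btau), d\rangle$ must be nonnegative, hence zero, for all $d \in S$; sufficiency because $f_\btheta$ is convex and $\L(G)$ is contained in its affine hull).

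Finally I would substitute $\nabla f_\btheta(\btau) = -\btheta - \nabla H_\counting(\btau)$, so the condition becomes $\btheta + \nabla H_\counting(\btau) \in S_\perp$, i.e. $\Theta(\btau) = -\nabla H_\counting(\btau) + S_\perp$. Decomposing $-\nabla H_\counting(\btau) = \u(\btau) + \mathbf w$ with $\u(\btau)$ its orthogonal projection onto $S$ and $\mathbf w := \mathrm{proj}_{S_\perp}\big(-\nabla H_\counting(\btau)\big) \in S_\perp$, we get $-\nabla H_\counting(\btau) + S_\perp = \u(\btau) + \mathbf w + S_\perp = \u(\btau) + S_\perp$, which is the claimed identity.

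The step I expect to be the main obstacle is making the reduction to $\nabla f_\btheta(\btau)\in S_\perp$ fully rigorous: one must invoke $\btau > \mathbf 0$ to kill the nonnegativity multipliers in the KKT system so that only the equality-constraint multipliers remain (whose span is precisely $S_\perp = \mathrm{range}(A^\top)$), and verify both directions of the equivalence. The remaining ingredients — strict convexity and uniqueness, smoothness of the entropy on the positive orthant, and the orthogonal-decomposition bookkeeping — are routine.
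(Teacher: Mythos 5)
Your proof is correct and follows essentially the same route as the paper's: both reduce membership in $\Theta(\btau)$ to the first-order optimality condition at the strictly positive point $\btau$, which (because the feasible directions from $\btau$ span exactly $S$) becomes $\btheta + \nabla H_\kappa(\btau) \in S_\perp$, after which the $S_\perp$-component of $-\nabla H_\kappa(\btau)$ is absorbed into the coset to give $\u(\btau) + S_\perp$. The only difference is packaging: the paper phrases the argument via the subdifferential of the extended function equal to $-H_\kappa$ on $\L(G)$ and $+\infty$ elsewhere, invoking Rockafellar's directional-derivative characterization, whereas you use the elementary variational inequality for constrained convex minimization directly --- the underlying orthogonality computation and the role of $\btau > \mathbf{0}$ (killing the nonnegativity multipliers and guaranteeing differentiability of $H_\kappa$) are identical.
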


\begin{proof}
  This follows fairly standard arguments in convex analysis after noting that the objective of the optimization problem coincides with the convex conjugate of $-H_\counting$ (e.g., see Rockafellar, 2015\footnote[1]{Rockafellar, R. T. (2015). \emph{Convex analysis}. Princeton university press.}; Bertsekas, 2009\footnote[2]{Bertsekas, Dimitri P. Convex optimization theory. Belmont: Athena Scientific, 2009.}), but with some specialization to our setting.

Define $f(\btau)$ to be the extended real-valued function that takes value $- H_\counting(\btau)$ for $\btau \in \L(G)$ and $+\infty$ for $\btau \notin \L(G)$. Let $\partial f(\btau)$ be the subdifferential of $f$ at $\btau \in \L(G)$. 

We will first show that $\Theta(\btau) = \partial f(\btau)$. 

By the definition of a subgradient, for $\btau \in \L(G)$,
$$
\begin{aligned}
\btheta \in \partial f(\btau) 
&\iff f(\tilde\btau) \geq f(\btau) + \btheta^\top (\tilde\btau - \btau)\quad  \forall \tilde\btau \in \R^d \\
&\iff f(\tilde\btau) \geq f(\btau) + \btheta^\top (\tilde\btau - \btau)\quad  \forall \tilde\btau \in \L(G) \\
&\iff \btau^\top \btheta - f(\btau)  \geq  \tilde\btau^\top \btheta - f(\tilde\btau) \quad \forall \tilde\btau \in \L(G) \\
&\iff \btau = \argmax_{\tilde\btau \in \L(G)} \tilde\btau^\top \btheta - f(\tilde\btau) \\
&\iff \btau = \argmin_{\tilde\btau \in \L(G)} - \tilde\btau^\top \btheta - H_\kappa(\tilde\btau) \\
&\iff \btheta \in \Theta(\btau).
\end{aligned}
$$
In the second line, we used the fact that the inequality always holds for $\tilde\btau \notin \L(G)$ because $f(\tilde \btau) = +\infty$ and the other quantities are finite. In the third line, we used the fact that $f(\tilde \btau)$, which coincides with $-H_\counting(\tilde \btau)$ on $\L(G)$, is strictly convex, so $\btau$ is a \emph{unique} maximizer of $\tilde\btau^\top \btheta - f(\tilde\btau)$.

Now, we will show that $\partial f(\btau) = \u(\btau) + S_\perp = \{ \u(\btau)+ \v : \v \in S_\perp\}$, which will conclude the proof.

We use the following  characterization of the subdifferential (Rockafellar, 2015, Theorem 23.2):
\begin{equation}
  \label{eq:rockafellar}
\btheta \in \partial f(\btau) 
\iff \btheta^\top \z \leq f'(\btau; \z) \quad \forall \z \in \R^d
\end{equation}
where $f'(\btau; \z)$ is the directional derivative of $f$ along direction $\z$. Since $f$ is the restriction of the differentiable function $-H_\counting$ to $\L(G)$, its directional derivatives coincide with those of $-H_\counting$ for points $\btau \in \L(G)$ with $\btau > \mathbf{0}$ and directions in $\z \in S$ (so that $\btau + \lambda \z \in \L(G)$ for small enough $\lambda > 0$), and are equal to $+\infty$ for points $\btau \in \L(G)$ and directions $\z \notin S$ (so that $\btau + \lambda \z \notin \L(G)$ for any $\lambda > 0$). That is, for $\btau \in \L(G)$ with $\btau > 0$, 
\begin{equation}
  \label{eq:directional}
f'(\btau; \z) = 
\begin{cases}
-\grad H(\btau)^\top \z & \z \in S\\
+\infty & \z \notin S
\end{cases}
\end{equation}
Therefore, by \cref{eq:rockafellar} and \cref{eq:directional}, for any $\btau \in \L(G)$ with $\btau > \mathbf{0}$,
$$
\begin{aligned}
\btheta \in \partial f(\btau)
&\iff \btheta^\top \z \leq f'(\btau; \z) &&\forall \z \in \R^d \\
&\iff \btheta^\top \z \leq  f'(\btau; \z) &&\forall \z \in S \\
&\iff \btheta^\top \z \leq -\grad H(\btau)^\top\z &&\forall\z \in S \\
&\iff \btheta^\top\z = -\nabla H(\btau)^\top \z &&\forall \z \in S \\
&\iff \btheta = \u(\btau) + \v &&\v \in S_\perp,
\end{aligned}
$$
where $\u(\btau)$ is the projection of $-\grad H(\btau)$ onto $S$. 
In the second line, we used the fact that the inequality always holds for $\z \notin S$ because $f'(\btau; \z) = + \infty$ and the other quantities are finite.
In the fourth line, we observed that $\z \in S$ iff $-\z \in S$ (since $S$ is a linear subspace) and 
$$
 \btheta^\top(-\z) \leq -\grad H(\btau)^\top(-\z) \iff \btheta^\top \z \geq -\grad H(\btau)^\top \z,
$$
so the third line is equivalent to both inequalities holding for all $\z \in S$. The equivalence of the final line to the penultimate line is a straightforward exercise by breaking both $\btheta$ and $-\grad H(\btau)$ into their orthogonal components along $S$ and $S_\perp$, respectively, and observing that the component of $-\grad H(\btau)$ along $S$ is $\u(\btau)$.
\end{proof}

\section{Out-of-Model Inference} \label{sec:extraout}

We now turn our attention to the problem of out-of-model inference; i.e., estimating $\btau_r$ where $r \not \in \V$.  There are many approaches for this problem that seem natural on the surface, but upon close inspection each one has it's problems. In \cref{sec:approach}, we proposed one approach that had certain desirable properties, but we considered many alternatives which enumerate below and discuss in detail.  

\subsection{Variable Elimination in $\p_{\btheta}$}

In \pgm, out-of-model inference was done by performing variable elimination in the graphical model $\p_{\btheta}$.  There are two problems with applying that idea here.  First, variable elimination will not in general be tractable for the graphical models we may encounter, since it is an exact inference method.  Second, if we run variable elimination to estimate in-model marginals from $\btheta$ produced by \prox, it will give a different answer than the pseudo-marginals $\btau$ produced by \prox (even if $\btau \in \M(\V)$ is a realizable marginal).  In this case, the pseudo-marginals estimated by \prox are the ones that should be trusted, and the parameters $\btheta$ are only useful in the context of our approximate marginal oracle \gbp.   In summary, this approach is not viable, and even if it was, it has undesirable properties.

Before moving on, we make note of an alternate way to perform exact out-of-model inference that will motivate our first approach to approximate out-of-model inference. They key idea is to add a new zero log-potentials $\btheta_r = \mathbf{0}$ for the new clique whose marginal we are interested in estimating.  Clearly, the introduction of this zero log-potential does not change the distribution $\p_{\btheta}$ or it's in-model marginals $\bmu_{\btheta}$.  However, when we run an exact \oracle with these new parameters, it will produce all in-model marginals, and the new out-of-model marginal as well. 


\subsection{Running \gbp on an Expanded Region Graph}

Using the idea above, one approach to out-of-model inference is to expand the region graph to include the region $r$ whose pseudo-marginal we are interested in.  This will require adding at least one new vertex $r$ to the region graph.  Edges and additional vertices could be added depending on the structure of the existing region graph, and the desired local consistency constraints that $\btau_r$ should obey.  With this new region graph, we can set $\btheta_r = \mathbf{0}$ (and do the same for any additional vertices we added as well), and run \gbp on the new graph.  This is an interesting idea, but it leaves open several questions:

\begin{enumerate}
\item What nodes and edges should be included in the augmented region graph?
\item What counting numbers should be assigned to those nodes? 
\item What formal guarantees can we make about this approach?
\item Can we analyze the message-passing equations in \gbp to perform an equivalent computation without re-running the algorithm in its entirety?
\end{enumerate}

For question (1) above, a natural choice is to use the same structure as the original region graph.  For example, if the original region graph is a factor graph, then we can simply add one new vertex corresponding to the new one, and add edges connecting to the singleton cliques.  If the original region graph is saturated, then we can build a new saturated region graph that includes the new clique.  

For question (2) above, a natural choice is to use $\counting'_r=1$ for all regions $r$ (including the new region), since that is the scheme used to set $\counting$ within $\prox$.  Unfortunately, the new pseudo-marginals $\btau' = \gbp(\btheta', \counting')$ may not agree with the originally optimized pseudo-marginals $\btau = \gbp(\btheta, \counting)$ on the in-model cliques.  Specifically, $\btau_r$ need not equal $\btau'_r$ when $r \in \V$.  This is clearly undesirable, and would be a consistency violation.  A better choice of the counting numbers would be $ \kappa'_r = \kappa_r$ for $ r \in \V$ and  $\kappa_{r'} = 0$ otherwise.  As we show below, this approach has a compelling theoretical guarantee.

\begin{theorem} \label{thm:project}
Let $G=(\V,\E)$ be a region graph, $\btheta$ be parameters, $\counting$ be positive counting numbers and let $\btau = \gbp(G, \btheta, \counting)$.  Now let $G'=(\V',\E')$ be a region graph that extends $G$ (i.e, $\V \subseteq \V'$ and $\E \subseteq \E'$), $\btheta'_r = \btheta_r$ if $r \in \V$ and $\btheta'_r = \mathbf{0}$ if $r \not \in \V$, $\counting'_r = \counting_r$ for $ r \in \V$ and $\counting'_r = \varepsilon$ otherwise.

$$ \btau' = \lim_{\substack{\varepsilon \rightarrow 0^+}} \gbp(G', \btheta', \kappa') $$

If $S = \set{\btau' \in L(G') \mid \btau'_r = \btau_r \forall r \in \V} \neq \varnothing $, then

$$\btau' = \argmax_{\btau' \in S} \sum_{r \in \V' \setminus \V} H(\btau'_r) $$
\end{theorem}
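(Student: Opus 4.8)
The plan is to identify $\gbp(G',\btheta',\counting')$, for each fixed $\varepsilon>0$, with the \emph{unique} solution of the approximate free energy minimization problem of \cref{prob:energy} on the extended region graph (which is exactly what \cref{thm:energy} provides), and then send $\varepsilon\to 0^+$ via a standard perturbed-convex-program (Tikhonov / lexicographic) argument. The limiting minimizer will turn out to be the maximum-entropy completion claimed in the statement.

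First I would write out the objective explicitly. By \cref{thm:energy}, for $\varepsilon>0$ the vector $\btau'^{(\varepsilon)} := \gbp(G',\btheta',\counting')$ is the minimizer over $\btau'\in\L(G')$ of
$$
g_\varepsilon(\btau') \;=\; -\btau'^{\top}\btheta' - H_{\counting'}(\btau')
\;=\; \Big(-\!\sum_{r\in\V}\btau_r'^{\top}\btheta_r - \sum_{r\in\V}\counting_r H(\btau_r')\Big)
\;+\;\varepsilon\Big(-\!\!\sum_{r\in\V'\setminus\V}\!\! H(\btau_r')\Big)
\;=\; f_0(\btau')+\varepsilon\, f_1(\btau'),
$$
where I used $\btheta'_r=\mathbf{0}$ and $\counting'_r=\varepsilon$ for $r\notin\V$. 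Here $f_0$ depends only on the in-model coordinates $(\btau_r')_{r\in\V}$ and is strictly convex there (each $-H(\btau_r')$ is strictly convex and $\counting_r>0$), while $f_1$ is strictly convex in the out-of-model coordinates $(\btau_r')_{r\in\V'\setminus\V}$; both are continuous and bounded on the compact polytope $\L(G')$, so $g_\varepsilon$ is strictly convex and has a unique minimizer for every $\varepsilon>0$.

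The key structural step is to show $\argmin_{\btau'\in\L(G')} f_0(\btau') = S$. Since $G'$ retains every vertex and edge of $G$ (i.e. $\V\subseteq\V'$, $\E\subseteq\E'$), the projection of $\L(G')$ onto the $\V$-coordinates is contained in $\L(G)$, so $\min_{\L(G')} f_0 \ge \min_{\tilde\btau\in\L(G)}\big(-\tilde\btau^{\top}\btheta - H_{\counting}(\tilde\btau)\big)$, and by \cref{thm:energy} the right-hand side is attained \emph{uniquely} at $\tilde\btau=\btau=\gbp(G,\btheta,\counting)$. The hypothesis $S\neq\varnothing$ exhibits a $\btau'\in\L(G')$ with $\btau_r'=\btau_r$ for all $r\in\V$, so this lower bound is met; and uniqueness of the $\L(G)$-minimizer forces any minimizer of $f_0$ over $\L(G')$ to agree with $\btau$ on $\V$, hence to lie in $S$. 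Thus $\argmin_{\L(G')} f_0 = S$. Finally, I would invoke the standard fact that for a perturbed program $g_\varepsilon = f_0 + \varepsilon f_1$ with $f_0,f_1$ continuous on a compact set, the minimizers $\btau'^{(\varepsilon)}$ converge as $\varepsilon\to 0^+$ to the minimizer of $f_1$ over $\argmin f_0 = S$; this minimizer is unique because $f_1$ is strictly convex over the convex set $S$ (two distinct points of $S$ differ in some $\V'\setminus\V$ coordinate), which upgrades subsequential convergence to convergence of the whole net. Rewriting $\min_{S} f_1 = \max_{S}\sum_{r\in\V'\setminus\V} H(\btau_r')$ yields the claim.

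I expect the main obstacle to be making this limiting argument fully rigorous: one must verify that the minimizers $\btau'^{(\varepsilon)}$ do not drift toward some point of $\L(G')$ on which $f_0$ fails to be minimal (equivalently, that $f_0(\btau'^{(\varepsilon)})\to\min f_0$, which follows by comparing $g_\varepsilon(\btau'^{(\varepsilon)})$ with $g_\varepsilon$ evaluated at a fixed element of $S$), and then that among the cluster points—each of which lies in $S$—minimality of $f_1$ together with its strict convexity on $S$ singles out a unique limit. A secondary, more routine point needing care is the inclusion $\pi_{\V}(\L(G'))\subseteq\L(G)$, which is immediate from $\E\subseteq\E'$ and the definition of the local polytope in \eqref{eq:localpolytope}.
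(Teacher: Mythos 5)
Your proposal is correct and follows essentially the same route as the paper's proof: decompose the free energy on $G'$ into an $\varepsilon$-independent term over the in-model regions plus $\varepsilon$ times the negative out-of-model entropies, identify the minimizers of the former over $\L(G')$ with $S$ (using $S\neq\varnothing$ and strict convexity), and let $\varepsilon\to 0^+$ so the residual entropy term selects the maximum-entropy element of $S$. Your write-up is in fact more careful than the paper's sketch, since you make explicit the projection argument $\pi_{\V}(\L(G'))\subseteq\L(G)$ and the Tikhonov-type limiting argument that guarantees the limit exists and lands on the claimed maximizer.
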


\begin{proof}
We begin by restating the free energy minimization problem solved by \gbp.
\begin{align*}
\bmu' &= \argmin_{\btau' \in \L(G')} -\btheta^{\top} \btau' -  H_{\kappa'}(\btau') \\\
&= \argmin_{\btau' \in \L(G')} - \Big[\sum_{r \in \V} \btheta_r^{\top} \btau' + \kappa_r H(\btau'_r) \Big] - \Big[\sum_{r \in \V' \setminus \V} \mathbf{0}^{\top} \btau' + \varepsilon H(\btau'_r) \Big] \\
&= \argmin_{\btau' \in \L(G')} - \Big[\sum_{r \in \V} \btheta_r^{\top} \btau' + \kappa_r H(\btau'_r) \Big] - \varepsilon \sum_{r \in \V' \setminus \V} H(\btau'_r) \\
\end{align*}

Note that as $\varepsilon \rightarrow 0$, the objective only depends on $\btau_r$ for $r \in \V$ (and not $r \in \V' \setminus \V$).  Thus, $\btau_r$ only affect the problem via the constraints they impose on the problem.  Since $\btau$ is the optimizer of the relaxed problem when $L(G')$ is replaced by $L(G)$ (which includes a subset of the constraints), if $\btau$ is feasible in the larger problem (which it is by assumption $S \neq \varnothing$), it is also optimal in this problem.  Moreover, since we are taking the limit as $\varepsilon \rightarrow 0$ from the right, there will be an infinitesimally small entropy penalty, which will force $\bmu'_r$ to have maximum entropy among marginals that are consistent with $\bmu$, as desired.
\end{proof}

\cref{thm:project} is a compelling reason to use this approach, namely running \gbp with zero counting numbers for the new cliques whose marginals we are estimating.   One subtle detail to this theorem is that it is certainly possible that $ S = \varnothing$, which means that there aren't any pseudo-marginals in the expanded region graph that are consistent with the pseudo-marginals in the original region graph.  In this case, it is not immediately clear how to characterize the behavior of this approach.  

\begin{remark}[Special Case: Factor Graph]
In the special case when both the original and expanded region graphs are factor graphs, we can guarantee that $S \neq \varnothing$ and we can efficiently estimate the new pseudo-marginal without rerunning \gbp over the full graph.  Since factor graphs only require each pseudo-marginal to be internally consistent with respect to the one-way marginals, we can always find higher-order marginals by multiplying the one-way marginals.  Clearly, this gives the maximum entropy estimate for the new pseudo-marginal that is internally consistent with the existing ones.  This is a computationally cheap estimate: it simply requires multiplying one-way marginals and does not require any iterative message passing scheme.  
\end{remark}

For more complex region graphs, things do not work out so nicely.  Since we are mainly interested in saturated region graphs in this work, this nice result for factor graphs is not particularly useful for our purposes.  In practice, there is a problem with running \gbp with a zero or near-zero counting numbers.  We observed empirically that using small counting numbers severely deteriorates the convergence rate of \gbp, and for that reason, this is not an ideal approach.  

\subsection{Minimizing Constraint Violation and Maximizing Entropy}

While the method described above has some drawbacks in practice, the principles underlying the approach are still sound: namely, we should find the maximum entropy distribution for the new marginal that is consistent with the existing marginals (for some natural notion of consistency).  However, for complex region graphs, it is certainly possible that no such marginals exist.  In that case, a natural alternative would be to find a pseudo-marginal that minimizes the constraint violation, and among all minimizers, has maximum entropy.  This is the approach that we evaluated empirically, and described in \cref{sec:approach}.  

It requires solving a quadratic minimization problem over the probability simplex.  This problem can be readily solved with iterative proximal algorithms like entropic mirror descent \cite{beck2003mirror}.   Entropic mirror descent guarantees the solution found will have maximum entropy among all minimizers of the objective.  Thus, when $S \neq \varnothing$, this method gives the same answer as \cref{thm:project}.  However, it is more general, and also does something principled when $S = \varnothing$.  Additionally, this method does not require any information about attributes not in $r$, and even though it is an iterative algorithm, each iteration runs much faster than an iteration of $\gbp$.  

\subsection{Running \prox over expanded local polytope.}  While the idea above is more principled than the alternatives that preceded it, it is still not ideal because it does not guarantee perfect consistency between the in-model pseudo-marginals and the out-of-model pseudo marginals.  When perfect consistency is not achievable, it settles for minimizing the constraint violation. 

We can overcome this limitation by running \prox on an \emph{over-saturated} region graph.  That is the region graph will contain vertices for every region necessary to define the loss function, \emph{and} every region whose pseudo-marginal we are interested in estimated.  The additional regions do not affect the loss function (the log-potentials will always remain $\mathbf{0}$), but it does impact the constraints.  In particular, upon convergence, the estimated pseudo-marginals will all be locally consistent.  This comes at a cost, however.  Since the region graph contains more vertices and edges, each iteration of \prox requires more time, and the algorithm as a whole is slower.  Whether it makes sense to use this strategy depends on how important perfect consistency is, as well as how many new marginals must be answered.  In our empirical evaluation of this approach, we found that it did produce better estimates than the previous idea, but also took considerably longer.  

\subsection{Incorporating Global Information}

As we saw empirically in \cref{sec:experiments}, our approach to out-of-model inference did not perform particularly well compared to the exact method used in \pgm.  We conducted more experiments to verify this in \cref{sec:extramwem}.  In this subsection, we explore in greater detail why it did not perform well in all cases.

Consider a simple graphical model with cliques $\C = \set{\set{A,B}, \set{B,C}}$, and suppose that $A$ is highly correlated with $B$ and $B$ is highly correlated with $C$.  Then clearly, $A$ and $C$ should also be highly correlated.  When performing exact inference in this model, we preserve this correlation between $A$ and $C$.  However, when we only require local consistency for the new clique, we will assume that $A$ and $C$ are independent, and lose the correlation between $A$ and $C$.  

Note that all methods described thus far suffer from this problem, not just the one method we evaluated in this paper.  To correctly preserve the correlation between $A$ and $C$, we would have to first estimate the $\set{A,B,C}$ marginal then derive the $\set{A,C}$ marginal from it.  This could be accomplished by adding an $\set{A,B,C}$ region to the region graph and using any of the methods described above.   In this toy problem, it is easy enough to do and feasible, but for larger region graphs, it is not immediately obvious how to generalize the idea.

Since exact marginal inference is not feasible for the graphs we are interested in, it is clear that we must make some approximation.  It is not clear what the nature of the approximation should be, however.  We showed that only using local information in the approximation has problems in some cases, and utilizing some global information may give better results in some cases.  We leave this as an interesting open problem. 

\section{Additional Experiments} \label{sec:extraexp}

\subsection{Synthetic Data used in Experiments} \label{sec:synthetic}

Given a domain size $(n_1, \dots, n_d)$ and a number of records $m$, we generate synthetic data to use in experiments as follows:

\begin{enumerate}
\item Compute a random spanning tree of the complete graph with nodes $1, \dots, d$.  The edges in this tree will correspond to the cliques in our model.  
\item For each edge $ r $ in the tree, set $\btheta_{r} \sim N(0, \sigma^2)^{n_r} $.   Here $\sigma$ is a ``temperature'' parameter that determines the strength of the parameters.  
\item Sample $m$ records from the graphical model $\p_{\btheta}$.  
\end{enumerate}

\subsection{MWEM Experiments} \label{sec:extramwem}

In \cref{fig:mwem} of \cref{sec:experiments} we observed that integrating \apgm into MWEM can enable the mechanism to run for more rounds, but the approximation resulted in much worse error for the same number of rounds.  When run to completion, MWEM+\apgm did achieve lower error than the minimum error achieved by MWEM+\pgm, but it required running for $3\times$ as many rounds and thus spending $3\times$ as much privacy budget.  In the figure below, we include additional lines for different privacy levels $\epsilon=0.05, 0.1, 0.2$ \emph{per round}.  As shown in the figure, it would be better to run MWEM + \mysf{Exact} for 35 round at $\epsilon = 0.1$ than it would be to run MWEM + \mysf{Region Graph} for 70 rounds at $\epsilon = 0.05$.

\begin{center}
\includegraphics[width=0.5\textwidth]{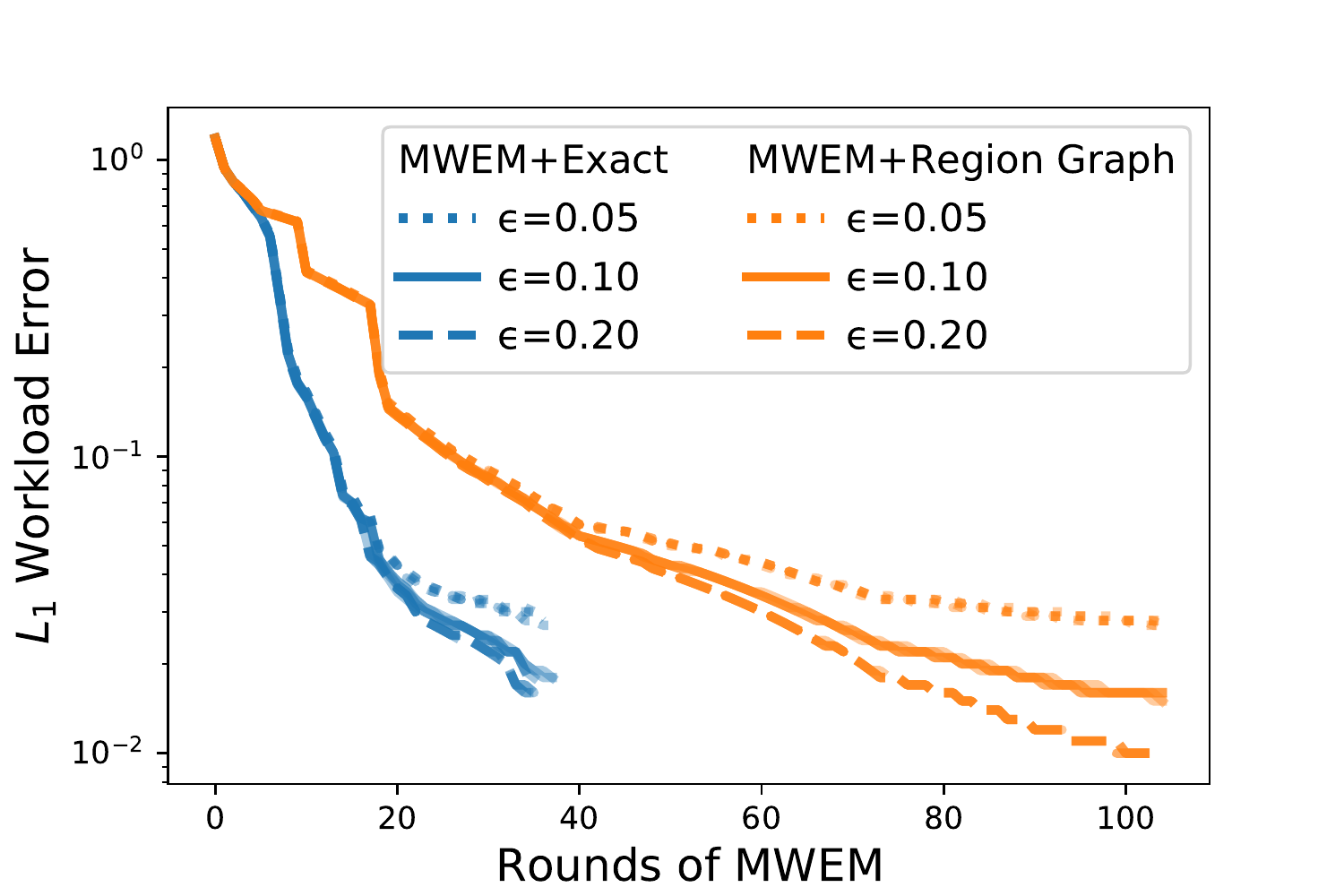}
\end{center}

As hinted at in the main text, the main reason \mysf{Region Graph} performs poorly here is because it only incorporates local information when conducting out-of-model inference, which is problematic for this dataset.  To demonstrate that this is really the problem, we repeat the experiment with $\epsilon=\infty$.  That is, in each round of MWEM, we exactly select the worse approximated clique, and measure the corresponding marginal with no noise.  Since no noise is added, the measured marginals solve \cref{prob:convex} and there is no need to run \prox.  Thus, the only difference between \mysf{Exact} and \mysf{Region Graph} is in the handling of out-of-model marginals.  We run the experiment for five datasets and plot the results below.  The additional error for \mysf{Region Graph} is particularly large for the fire and msnbc dataset but not as much for adult, loans, and titanic.  msnbc is a click stream dataset and is thus naturally modeled as a Markov chain.  Once the 2-way marginals corresponding to the edges in this Markov chain are measured, MWEM + Exact achieves very low error.  MWEM + \mysf{Exact} preserves the long range dependencies between the first and last node in the chain, whereas MWEM + \mysf{Region Graph} only preserves the local dependencies, which explains the difference in this case.  Some datasets (like adult, loans, and titanic) do not have strong dependency chains as msnbc does, and in these cases there is a smaller difference in error for out-of-model marginals.  

\includegraphics[width=0.33\textwidth]{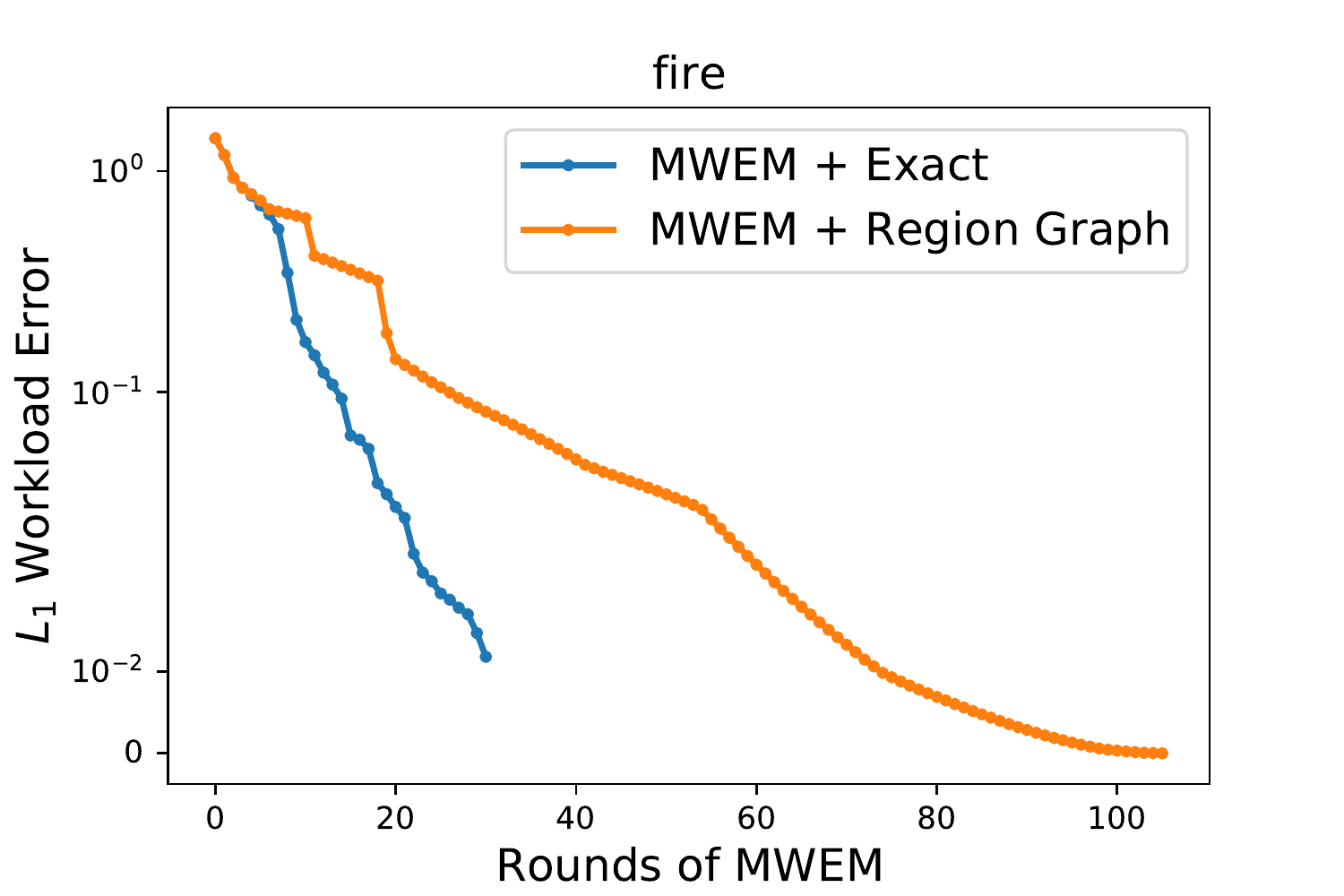}
\includegraphics[width=0.33\textwidth]{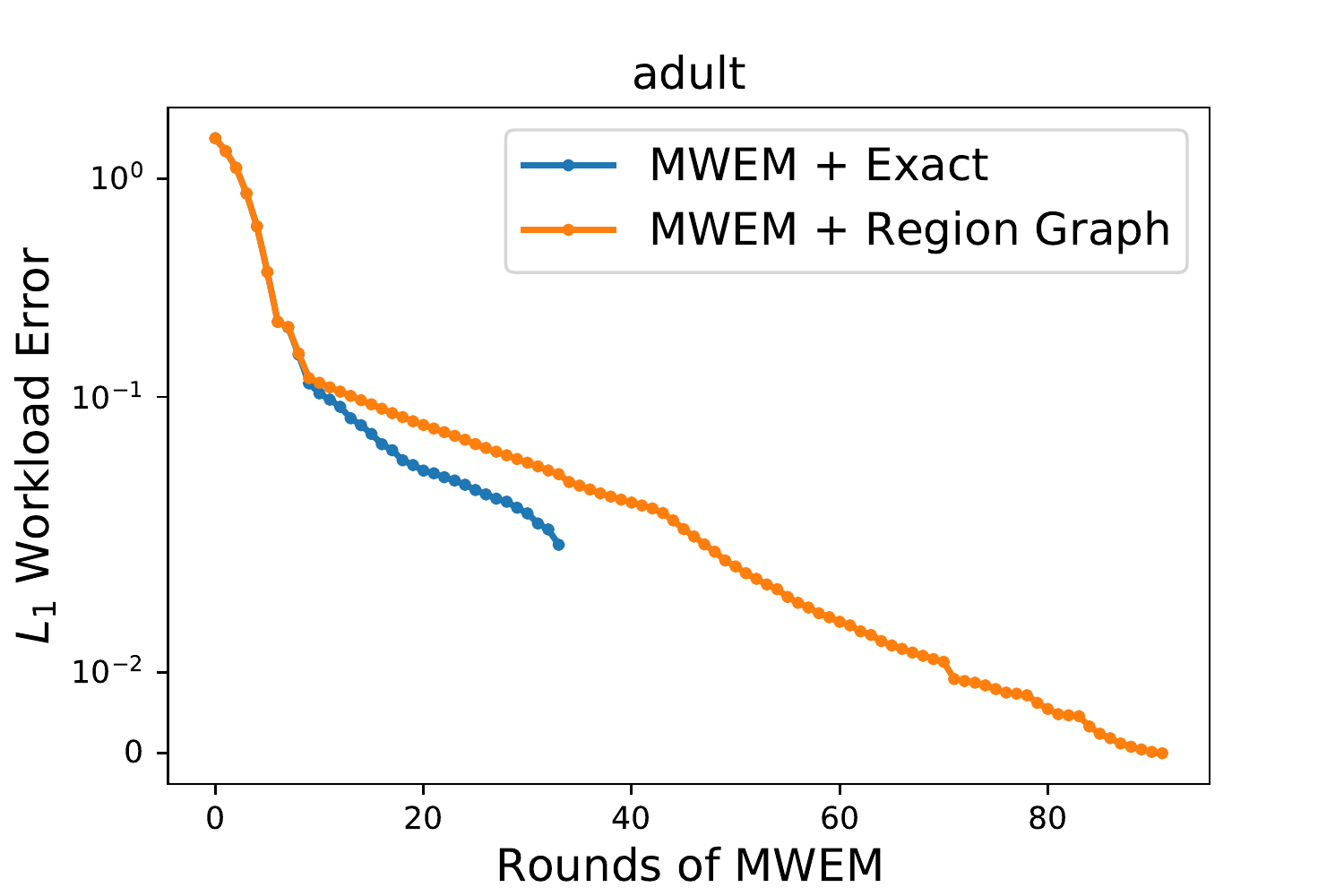}
\includegraphics[width=0.33\textwidth]{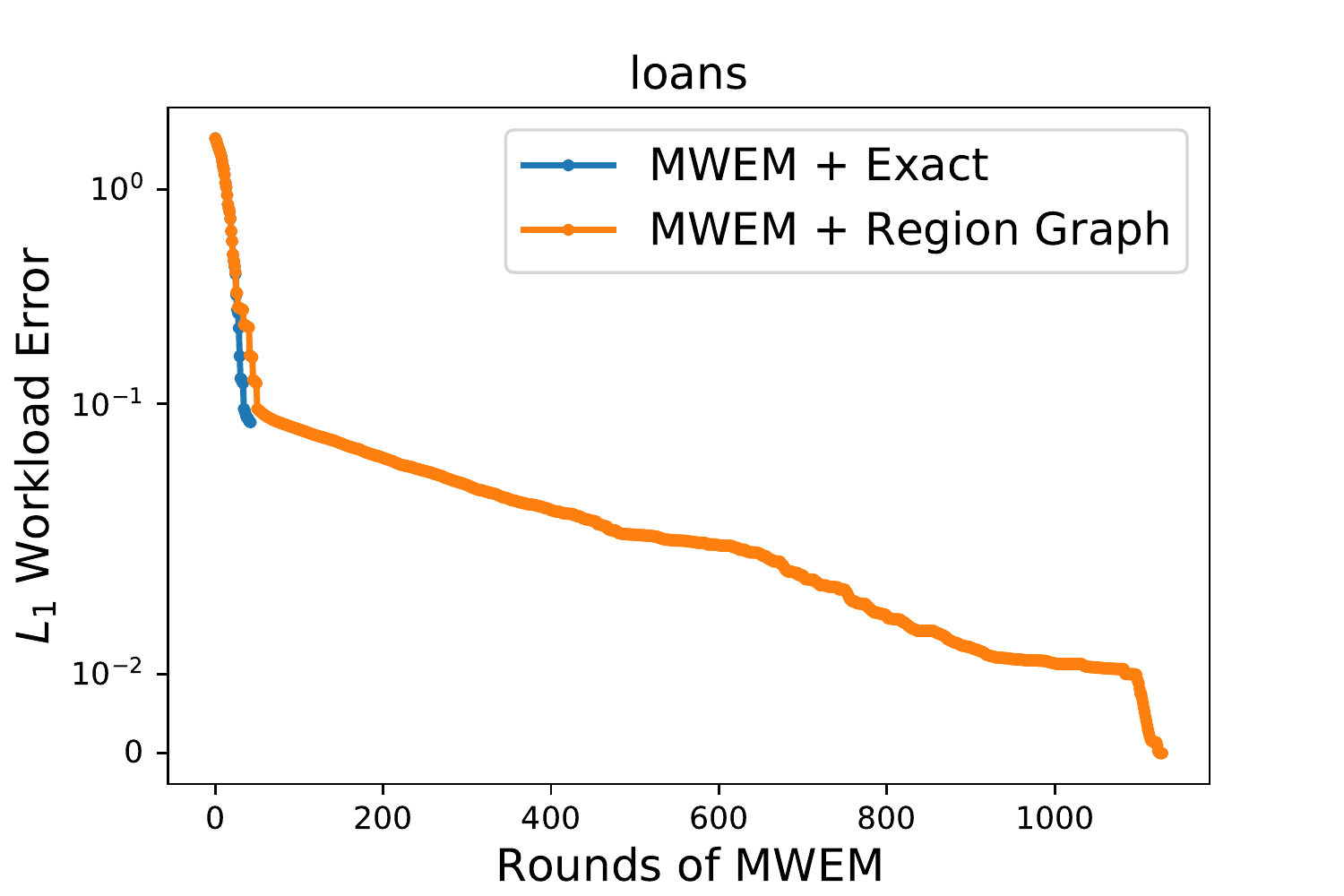}
\begin{center}
\includegraphics[width=0.33\textwidth]{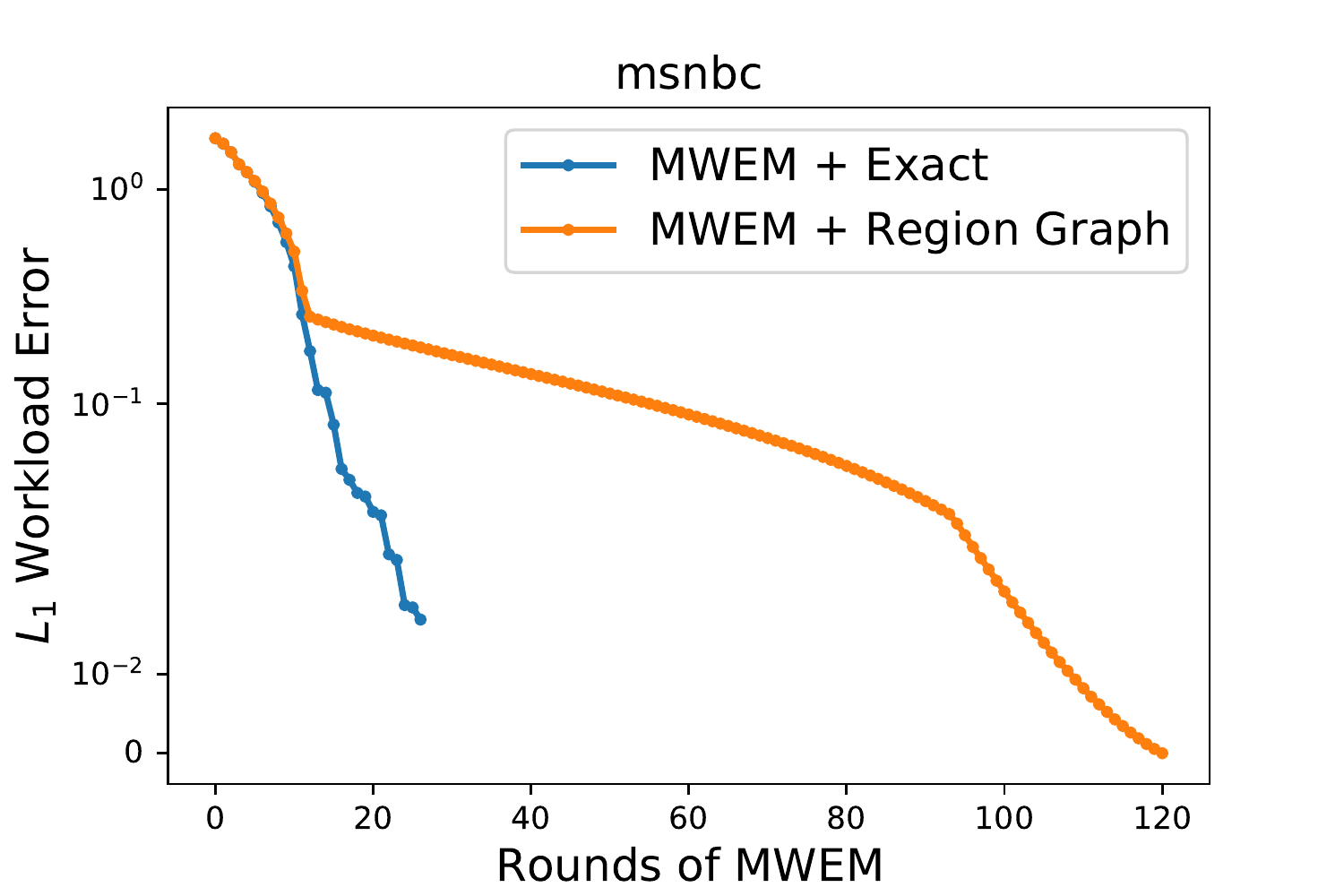}
\includegraphics[width=0.33\textwidth]{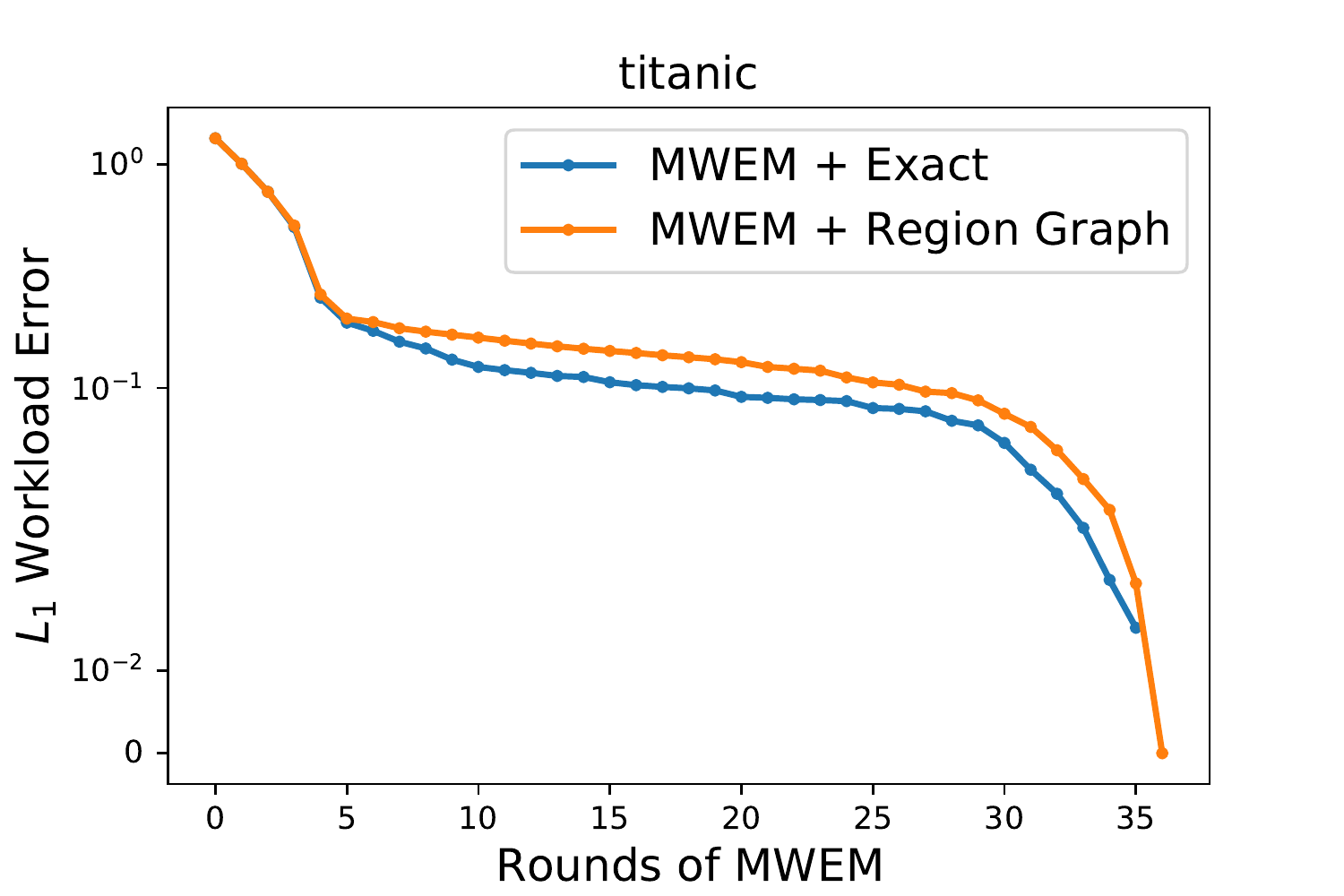}
\end{center}

\subsection{FEM Experiments}

As hinted at in \cref{sec:experiments}, our method can be integrated into FEM as well \cite{vietri2020new}.  The integration is similar to how \pgm is used to improve DualQuery \cite{mckenna2019graphical,Gaboardi14Dual}.  Like MWEM, FEM runs for a specified number of rounds, and maintains an estimate of the data distribution (in tabular format) throughout the execution.  In each round, FEM selects a query from the workload that is poorly approximated under the current estimate of the data distribution using the exponential mechanism.  This is the only way in which FEM interacts with the sensitive data (Unlike MWEM, it does not measure this query with Laplace noise).  It then adds records to the estimated dataset that could explain the previous measurement, in the hopes of reducing the error on that query.  

To integrate into FEM, we first note that the mechanism only depends on the data through the answers to the workload.  If the workload consists of marginal queries, then our methods apply.  We derive an expression for the (negative) log-likelihood of the observations, which are the samples from the exponential mechanism in each round, and use this as our objective function for \cref{prob:convex}.  After solving \cref{prob:convex} with \prox, we can use the estimated pseudo-marginals to answer the workload in place of the synthetic dataset generated by FEM.

In this experiment, we use the adult dataset, as that was one of the main datasets considered in FEM.  We note that FEM has a number of hyper-parameters, and it is not obvious how to select them, and selecting them incorrectly can result in very poor performance.  However, in the authors open source implementation, they provided a set of tuned hyper-parameters a particular dataset/workload pair: the adult dataset and the workload of 64 random 3-way marginals.  For a fair comparison, this is the experimental setting we consider. 

We run FEM and FEM + Region Graph and note that FEM + Exact failed to run here, because the underlying junction tree necessary to perform exact marginal inference is too large.  We report the $L_{\infty}$ workload error (which is what FEM is designed to minimize), as well as the $L_1$ workload error (which better captures the overall error.  The results are shown below.  In general, FEM + Region Graph achieves slightly lower error than regular FEM in both $L_{\infty}$ and $L_1$ error. There is one outlier for $L_{\infty}$ error when $\epsilon=1$ that skews the results, and there was negligible improvement at $\epsilon=0.25$ and $\epsilon=0.5$ as well.  There was consistent improvement in $L_1$ error for every value of $\epsilon$, although the magnitude of the improvement is somewhat small. 

\begin{center}
\includegraphics[width=0.4\textwidth]{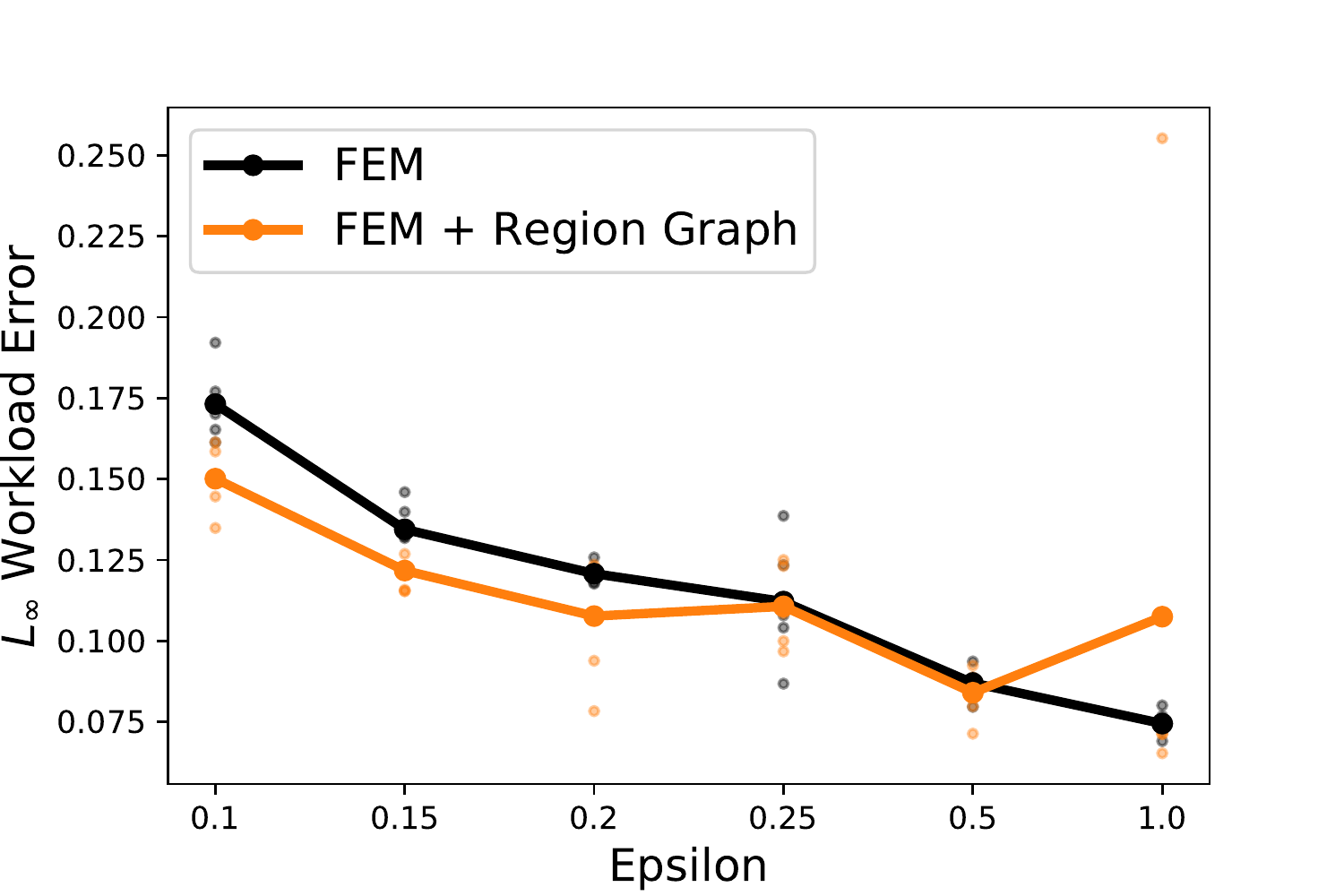} 
\includegraphics[width=0.4\textwidth]{fig/fem_l1} 
\end{center}

\subsection{Comparison with PriView and Relaxed Projection} \label{sec:compare_alternatives}

As discussed in \cref{sec:related}, PriView proposed a method for resolving inconsistencies in noisy marginals that can be seen as a less general competitor to us.  We compare against that competitor here.  We use the implementation of this method available from team DPSyn in the 2018 NIST synthetic data competition \cite{wang2021dpsyn}.  In addition, we compare against a variant of the Relaxed Projection algorithm from \cite{aydore2021differentially}.  We describe the modifications made to this algorithm in the next section.  

To compare these methods with our proposed method, we consider the adult dataset and measure $32$ random $2$-way marginals using the Gaussian mechanism with privacy parameters $\epsilon \in [0.01, 100]$ and $\delta = 10^{-6}$.  In this particular case, \pgm can also run, so we include that as a competitor as well.  We report the $L_1$ error of the estimated marginals, averaged over all measured marginals and $5$ trials for each method in the table below.  All four methods for resolving inconsistencies provide significantly better error than the original noisy marginals.  

Ignoring Relaxed Projection, \prox (Exact) is the best method in every setting except $\epsilon = 100.0$.  The second best method is \prox (Region Graph) in every setting except $\epsilon = 0.01$ and $\epsilon = 100.0$.  At the smallest value of $\epsilon$, our method is likely overfitting to the noise, and the estimated pseudo-marginals are likely far from the set of realizable marginals.  At the largest value of $\epsilon$, both variants of \prox simply didn't run for enough iterations ($10000$ was used in this experiment).  Due to the small amount of noise, the true solution to \cref{prob:convex} likely does not contain any negatives, and the PriView approach solves the relaxed problem without the non-negativity constraints in closed form.  \prox should eventually converge to the same solution but it would require more than 10000 iterations.  

Relaxed Projection (RP) performs slightly better than even \prox (Exact) for $\epsilon \leq 0.1$, an interesting and surprising observation.  We conjecture that this is because RP essentially restricts the search space to distributions which are a mixture of products (as described in the next section).  This can be seen as a form of regularization, which can help in the high-privacy / high-noise regime.  For $\epsilon > 0.1$, RP is worse than both \prox (Exact) and \prox (Region Graph).  Moreover, it is the only method whose error does not tend towards $0$ as $\epsilon$ gets larger.  We suspect this is due to the non-convexity in the problem formulation for RP: it is finding a local minimum to the problem that does not have $0$ error.  Alternatively, it could be possible that the restircted search space does not include a distribution with near-zero error, although we believe this is a less likely explanation.  

\begin{center}
\resizebox{\textwidth}{!}{
\begin{tabular}{c|ccccc}
\toprule
$\epsilon$ & \prox & \prox & PriView & Relaxed & Noisy \\
& (Exact) & (Region Graph) & Consistency & Projection & Marginals \\
\midrule
\texttt{\footnotesize{0.0100}}                 &  \texttt{\footnotesize{0.4375 $\pm$ 0.0245}} &   \texttt{\footnotesize{0.5630 $\pm$ 0.0344}} &  \texttt{\footnotesize{0.5229 $\pm$ 0.0202}} &  \texttt{\footnotesize{0.4189 $\pm$ 0.0275}} &  \texttt{\footnotesize{28.050 $\pm$ 0.1249}} \\
\texttt{\footnotesize{0.0316}} &  \texttt{\footnotesize{0.2848 $\pm$ 0.0081}} &    \texttt{\footnotesize{0.3277 $\pm$ 0.0100}} &  \texttt{\footnotesize{0.3525 $\pm$ 0.0078}} &  \texttt{\footnotesize{0.2567 $\pm$ 0.0045}} &   \texttt{\footnotesize{8.8782 $\pm$ 0.0254}} \\
\texttt{\footnotesize{0.1000}}                  &  \texttt{\footnotesize{0.1724 $\pm$ 0.0032}} &  \texttt{\footnotesize{0.1788 $\pm$ 0.0025}} &  \texttt{\footnotesize{0.1965 $\pm$ 0.0051}} &   \texttt{\footnotesize{0.1620 $\pm$ 0.0036}} &   \texttt{\footnotesize{2.8091 $\pm$ 0.0101}} \\
\texttt{\footnotesize{0.3162}}  &  \texttt{\footnotesize{0.0908 $\pm$ 0.0009}} &  \texttt{\footnotesize{0.0931 $\pm$ 0.0018}} &  \texttt{\footnotesize{0.1007 $\pm$ 0.0016}} &  \texttt{\footnotesize{0.1031 $\pm$ 0.0025}} &    \texttt{\footnotesize{0.8919 $\pm$ 0.0030}} \\
\texttt{\footnotesize{1.0000}}                  &  \texttt{\footnotesize{0.0433 $\pm$ 0.0008}} &  \texttt{\footnotesize{0.0447 $\pm$ 0.0006}} &   \texttt{\footnotesize{0.0510 $\pm$ 0.0003}} &  \texttt{\footnotesize{0.0746 $\pm$ 0.0009}} &   \texttt{\footnotesize{0.2853 $\pm$ 0.0007}} \\
\texttt{\footnotesize{3.1622}}   &  \texttt{\footnotesize{0.0187 $\pm$ 0.0001}} &  \texttt{\footnotesize{0.0198 $\pm$ 0.0002}} &  \texttt{\footnotesize{0.0229 $\pm$ 0.0003}} &  \texttt{\footnotesize{0.0617 $\pm$ 0.0007}} &   \texttt{\footnotesize{0.0934 $\pm$ 0.0003}} \\
\texttt{\footnotesize{10.000}}                 &  \texttt{\footnotesize{0.0074 $\pm$ 0.0001}} &  \texttt{\footnotesize{0.0087 $\pm$ 0.0001}} &  \texttt{\footnotesize{0.0095 $\pm$ 0.0001}} &  \texttt{\footnotesize{0.0582 $\pm$ 0.0011}} &   \texttt{\footnotesize{0.0324 $\pm$ 0.0001}} \\
\texttt{\footnotesize{31.622}}   &     \texttt{\footnotesize{0.0037 $\pm$ 0.0000}} &     \texttt{\footnotesize{0.0045 $\pm$ 0.0000}} &      \texttt{\footnotesize{0.0040 $\pm$ 0.0000}} &  \texttt{\footnotesize{0.0579 $\pm$ 0.0017}} &      \texttt{\footnotesize{0.0125 $\pm$ 0.0000}} \\
\texttt{\footnotesize{100.00}}                &     \texttt{\footnotesize{0.0027 $\pm$ 0.0000}} &     \texttt{\footnotesize{0.0032 $\pm$ 0.0000}} &     \texttt{\footnotesize{0.0018 $\pm$ 0.0000}} &  \texttt{\footnotesize{0.0574 $\pm$ 0.0012}} &      \texttt{\footnotesize{0.0054 $\pm$ 0.0000}} \\
\bottomrule
\end{tabular}}
\end{center}

\subsection{Implementation Details for Relaxed Projection}

The authors of the Relaxed Projection method released their code on GitHub.  They provided code to run their end-to-end MWEM-style algorithm, but did not expose the subroutine for performing the relaxed projection in a way that can easily be tested in isolation.  For that reason, we compare against a faithful reimplementation of their approach.  This reimplementation is available in the open source \pgm repository.  

One way to view RP is as optimizing over the set of distributions which are mixtures of products.  That is, each row of the relaxed tabualr format can be viewed as a product distribution (if the values for each feature are non-negative and sum to one).  For multiple rows, this translates to a format that has capacity to represent a mixture of product distributions.  While the authors do not propose restricting the feature values to satisfy the aforementioned constraints, in our reimplemenation, we apply softmax transformations to the table to ensure this invariant holds.  This is related to RAP\textsuperscript{softmax} as described by Liu et al. \cite{liu2021iterative}, although the interpretation as a mixture of products was not mentioned in that work.  For the experiment above, we consider distributions with $100$ mixture components.  

\newpage

\end{document}